\documentclass[a4paper,10pt]{article}
\PassOptionsToPackage{colorlinks=true, linkcolor=blue,citecolor=blue, urlcolor=blue}{hyperref}
\pdfoutput=1
\usepackage[utf8]{inputenc}
\usepackage{geometry}
\geometry{a4paper,left=25mm,right=25mm,top=3cm,bottom=4cm}
\usepackage{graphicx}
\usepackage{amsmath}
\usepackage{amsfonts}
\usepackage{amssymb}
\usepackage{amsthm}
\usepackage{bbold}
\usepackage{subfig}
\usepackage{enumerate}
\usepackage{theoremref}

\PassOptionsToPackage{table}{xcolor}
\usepackage{tikz}
\usetikzlibrary{arrows,calc,plotmarks}
\tikzset{>=angle 60}

\newcommand{\tn}[1]{\textnormal{#1}}

\newcommand{\mc}[1]{\mathcal{#1}}

\newcommand{\R}{\mathbb{R}}

\newcommand{\T}{\top}
\newcommand{\ol}[1]{\overline{#1}}

\newcommand{\fa}{\,\forall\,}

\DeclareMathOperator{\id}{id}
\DeclareMathOperator{\argmin}{arg min}

\DeclareMathOperator{\ddiv}{div}

\DeclareMathOperator{\TV}{TV}
\DeclareMathOperator{\spt}{spt}

\DeclareMathOperator{\dens}{dens}

\theoremstyle{plain}
\newtheorem{theorem}{Theorem}[section]
\newtheorem{lemma}[theorem]{Lemma}
\newtheorem{proposition}[theorem]{Proposition}

\theoremstyle{definition}

\newtheorem{remark}[theorem]{Remark}


\newcommand{\segmeas}{\tn{SegMeas}}
\newcommand{\segcoupl}{\tn{SegCoupl}}
\newcommand{\meas}{\tn{Meas}}
\newcommand{\smeas}{S}
\newcommand{\prob}{\tn{Prob}}
\newcommand{\bcolon}{\,\colon\,}
\newcommand{\project}{\tn{Proj}}
\newcommand{\WS}{{\mc{W}_2}}
\newcommand{\WD}{W}
\newcommand{\Tan}{T}
\newcommand{\sfold}{B}

\newcommand{\emb}{\tn{Emb}}
\newcommand{\testFunctions}{{C^\infty_0}}
\newcommand{\diff}{\tn{Diff}}
\newcommand{\region}{\Omega}
\newcommand{\liftM}{F}
\newcommand{\liftMB}{{F_B}}
\newcommand{\liftC}{f}

\newcommand{\cGeo}{{c_{\tn{geo}}}}
\newcommand{\feat}{\mc{F}}
\newcommand{\XEmb}{T}
\newcommand{\XCoef}{\lambda}
\newcommand{\XMode}{t}
\newcommand{\XEmbFeat}{\hat{T}}
\newcommand{\XModeFeat}{\hat{t}}
\newcommand{\XCoefSet}{\Lambda}
\newcommand{\subdiv}{\tn{\texttt{subdiv}}}
\newcommand{\adjY}{\mc{G}}
\newcommand{\cMin}{{c_{\tn{min}}}}

\numberwithin{equation}{section}

\title{Globally Optimal Joint Image Segmentation and Shape Matching Based on Wasserstein Modes}
\author{Bernhard Schmitzer and Christoph Schn\"orr}
\date{\today}

\begin{document}
\maketitle

\begin{abstract}
A functional for joint variational object segmentation and shape matching is developed.
The formulation is based on optimal transport w.r.t.~geometric distance and local feature similarity. Geometric invariance and modelling of object-typical statistical variations is achieved by introducing degrees of freedom that describe transformations and deformations of the shape template.
The shape model is mathematically equivalent to contour-based approaches but inference can be performed without conversion between the contour and region representations, allowing combination with other convex segmentation approaches and simplifying optimization. While the overall functional is non-convex, non-convexity is confined to a low-dimensional variable.
We propose a locally optimal alternating optimization scheme and a globally optimal branch and bound scheme, based on adaptive convex relaxation. Combining both methods allows to eliminate the delicate initialization problem inherent to many contour based approaches while remaining computationally practical.

The properties of the functional, its ability to adapt to a wide range of input data structures and the different optimization schemes are illustrated and compared by numerical experiments.
\end{abstract}

\tableofcontents

\section{Introduction}
	\subsection{Motivation}
		Object segmentation and matching are fundamental problems in image processing and computer vision as they form the basis for many high-level approaches to understanding an image.
		They are intimately related: segmentation of the foreground is a prerequisite for matching in a sequential processing pipeline. Whereas, when performed simultaneously, matching with a template of the sought-after object (e.g.~starfish, car, etc.) as prior knowledge can help guiding segmentation to become more robust to corruption of local image features through noise, occlusion and other distortions.
		Naturally the combined problem is more complicated.
		
		Today convex variational methods can solve image labelling and segmentation problems based on local cues exactly or in good approximation. But combining an object segmentation functional with a shape prior entails a delicate trade-off between descriptive power and computational complexity. Sophisticated shape priors are often described by highly non-convex functionals whereas convex shape prior functionals tend to be rather simplistic.
		Incompatibility between different shape representations within one approach or the requirement of geometric invariance are common causes of difficulty.

		In this paper we present a shape prior functional for simultaneous object segmentation and matching which has been designed specifically to address the issues of representation incompatibility and geometric invariance. Using optimal transport and the differential geometric structure of the 2-Wasserstein space for regularization, one can combine appearance modelling, description of statistical shape variations and geometric invariance in a mathematically uniform way.
		The linear programming formulation of optimal transport due to Kantorovich allows for an adaptive convex relaxation which can be used for a globally optimal branch and bound scheme, thus avoiding the initialization problem from which most non-convex approaches suffer.
	\subsection{Related Literature}
		\label{sec:RelatedLiterature}
		\paragraph{Image Segmentation and Shape Priors.}
		Variational methods based on convex relaxations have been successfully applied to obtain globally optimal (approximate) solutions to the originally combinatorial image labelling or segmentation problem \cite{ContinuousGlobalBinary06,Lellmann-Schnoerr-SIIMS-11,Pock-et-al-CVPR09}.
		The segmentation is usually encoded by (relaxed) indicator functions which allow for simple and convex formulation of \emph{local} data matching terms and regularizers that encourage \emph{local} boundary regularity, such as total variation and its generalizations.
		However, introducing \emph{global} regularizers, such as shape priors, into such models is difficult. Convex shape priors based on indicator functions are conceivable but tend to be rather simplistic and lack important features such as geometric invariance \cite{CremersKlodtICCV11Moments,SchmitzerSchnoerr-SSVM2011}.
		
		Somewhat complimentary is the representation of shapes by their outline contours. Treated as infinite dimensional manifolds \cite{Michor2006,sunmensoa10,PlaneCurveGeodesicExplicit-08} such representations can be used to construct sophisticated shape modelling functionals \cite{Charpiat-Shape-Statistics-05,PR-Kernel-Shape-03}. But matching contours with local image data usually yields non-convex functionals that can only be optimized locally via gradient descent. Often one has to internally convert the contour to the region representation. Therefore such approaches require a good initialization to yield reasonable results.
		
		\paragraph{Object Registration.}	
			Independent of the segmentation problem, computing meaningful registrations between two \emph{fixed} objects (e.g.~whole images, measures, meshes\ldots) has attracted a lot of attention.
			Typical applications are shape interpolation, data interpretation and using registrations as a basis for a measure of object similarity.
			Often one requires invariance of the sought-after registration under isometric transformations of either of the two objects.
			Major approaches include the framework of diffeomorphic matching and metamorphosis \cite{GlaunesTrouveYounesCVPR2004,YounesShape2010}, methods based on physical deformation energies \cite{RumpfRegression2013,HeerenShellGeodesics2012} and the metric approach to shape matching \cite{BroBroKimSIAM06,memoli-gromov-shape-11}.
			An extension to shapes that in addition to their geometry are equipped with a `signal living on the shape' is presented in \cite{TrouveFunctionalCurrents2014}.
			
			These methods provide impressive results at the cost of non-convex functionals and high computational complexity. Na\"ive online combination with object segmentation is thus not possible. In \cite{SchmitzerSchnoerr-JMIV2012} a shape prior based on object matching has been constructed through convex relaxation of the Gromov-Wasserstein distance \cite{memoli-gromov-shape-11}.
			
		\paragraph{Optimal Transport.}
			Optimal transport is a popular tool in machine learning and image analysis.
			It provides a meaningful metric on probability measures by `lifting' a metric from the base space. Thus it is a powerful similarity measure on bag-of-feature representations and other histograms \cite{Pele2009}.
			It is also applied in geometric problems to extract an object registration from the optimal transport plan \cite{OptimalTransportWarping}. However this requires alignment of the objects beforehand. A step towards loosening this constraint is presented in \cite{Guibas-EMDTransform99} where one optimizes over a suitable class of transformations.
			The 2-Wasserstein space, induced by optimal transport, exhibits structure akin to a Riemannian manifold \cite{Ambrosio2013}. This was exploited in \cite{OptimalTransportTangent2012} for analysis of spatial variations in observed sets of measures.
			
	\subsection{Contribution and Outline}
		We present a functional for object segmentation with a shape prior.
		Motivated by the literature on object registration, we propose to base the prior on matching the foreground proposal to a template object. For this we need to be able to jointly optimize over segmentation and registration.
		Matching is done via optimal transport and based both on geometry and local appearance information.
		Foreground and template are represented as metric measure spaces \cite{memoli-gromov-shape-11} which provides ample flexibility. This encompasses a wide range of spatial data structures (pixels, super-pixels, point clouds, sparse interest points, \ldots) and local appearance features (color, patches, filter responses, \ldots). 
		Inspired by \cite{OptimalTransportTangent2012} the Riemannian structure of the 2-Wasserstein space is used to model geometric transformations, object-typical deformations and changes in appearance in a uniform way. Hence, the resulting approach is invariant under translation and approximately invariant under rotation and scaling.
		
		It has recently been shown that this way of modelling transformations and deformations is equivalent to modelling based on closed contours \cite{SchmitzerSchnoerr-ShapeMeasures2013} but \emph{no conversion of shape representation} is required during \emph{inference}.
		So shape modelling and local appearance matching are performed \emph{directly in the same object representation}, allowing to combine the local appearance matching of indicator functions with the manifold based shape modelling on contours.
		Also, explicitly using the conversion during \emph{learning} greatly simplifies statistical analysis of the training data and avoids difficulties that arise in \cite{OptimalTransportTangent2012}.

		The resulting overall functional is non-convex, but non-convexity is constrained to a low-dimensional variable, making optimization less cumbersome than in typical contour-based approaches or shape matching functionals.
		Using the linear programming formulation of optimal transport due to Kantorovich, we derive an adaptive convex relaxation and construct a globally optimal branch and bound scheme thereon.
		Another option is to apply a local alternating optimization scheme. By employing both optimization techniques one after another their respective advantages (no initialization required, speed) can be combined.
		This allows to construct a `coarse' object localization method and a subsequent more precise segmentation method as different approximate optimization techniques of the very same functional instead of using two different models.
		Additionally an efficient graph-cut relaxation is discussed.
		
		\paragraph{Organization.} The paper is organized as follows:
		In Sect.~\ref{sec:MathematicalBackground} the mathematical background for the paper is introduced. We touch upon the convex variational framework for image segmentation, optimal transport and its differential geometric aspects and the description of shapes via manifolds of (parametrized) contours.
		The proposed functional is successively developed throughout Sect.~\ref{sec:OTRegularization}. We start in Sect.~\ref{sec:Setup} with a basic segmentation functional where optimal transport w.r.t.~a reference template is used as a shape prior.
		This functional has obvious limitations (e.g.~lack of geometric invariance).
		An alleviation is proposed in Sect.~\ref{sec:WassersteinModes} by introducing additional degrees of freedoms that allow transformation of the template set.
		These transformations can be used to achieve geometric invariance and to model statistical object variation, learned from training data (Sects.~\ref{sec:GeometricInvariance} and \ref{sec:StatisticalVariation}).
		In Sect.~\ref{sec:Optimization} we discuss two different approaches for optimization: locally, based on alternating descending steps and globally by branch and bound with adaptive convex relaxations (Sects.~\ref{sec:OptimizationAlternating} and \ref{sec:OptimizationBnB}).
		A relaxation that replaces optimal transport by graph cuts for reduced computational cost is derived in \ref{sec:OptimizationGraphCut}.
		Numerical experiments are presented in Sect.~\ref{sec:Experiments} to illustrate the different features of the approach and to compare the two optimization schemes.
		A brief conclusion is given at the end.
	\subsection{Notation}
		For a measure space $A$ we denote by $\meas(A)$ the set of non-negative and by $\prob(A)$ the set of probability measures on $A$. For two measure spaces $A,B$ and a measurable map $f : A \rightarrow B$ we write $f_\sharp \mu$ for the push-forward of a measure $\mu$ from $A$ to $B$ which is defined by $(f_\sharp \mu)(\sigma) = \mu\big(f^{-1}(\sigma)\big)$ for all measurable $\sigma \subset B$.
		For $A \subset \R^n$ we denote by $\mc{L}_A$ the Lebesgue measure constrained to $A$ and by $|\Omega|$ the Lebesgue volume of a measurable set $\Omega \subset \R^n$. Sometimes, by abuse of notation we use $\mc{L}$ to denote the discrete approximation of the Lebesgue measure for discretized domains.
		For a product space $A \times B$ we denote by $\project_A : A \times B \rightarrow A$ the canonical projection onto some component.
		For a differentiable manifold $M$ we write $T_x M$ for the tangent space at footpoint $x \in M$.
\section{Mathematical Background}
	\label{sec:MathematicalBackground}
	\subsection{Convex Variational Image Segmentation}
		\label{sec:Segmentation}
		Let $Y \subset \R^2$ be the (continuous) image domain.
		The goal of object segmentation is the partition of an image into fore- and background. Such a partition can be encoded by an indicator function $u : Y \rightarrow \{0,1\}$ where $u(y) = 1$ encodes that $y \in Y$ is part of the foreground.
		A typical functional for a variational segmentation approach has the form \cite{Lellmann-Schnoerr-SIIMS-11}
		\begin{align}
			\label{eq:SegmentationIntro}
			E(u) = \int_{Y} s\big(y,u(y)\big)\,dy + R(u)\,.
		\end{align}

		The first term is referred to as \emph{data term}, the second as \emph{regularizer}.
		The data term $s\big(y,u(y)\big)$ describes how well label $u(y)$ matches pixel $y$, based on local appearance information. The regularizer $R$ introduces prior knowledge to increase robustness to noisy appearance. A common assumption is that boundaries between objects are smooth, a suitable regularizer then is the \emph{total variation}.

		To obtain feasible convex problems the constraint that $u$ must be binary is usually relaxed to the interval $[0,1]$ and the functional \eqref{eq:SegmentationIntro} is suitably extended onto non-binary functions, such that it is convex. In the case of total variation regularization such an extension may be
		\begin{align}
			\label{eq:SegmentationRelaxed}
			E(u) = \int_{Y} f(y) \cdot u(y) \, dy + \TV(u)
		\end{align}
		where the data term of \eqref{eq:SegmentationIntro} can be equivalently expressed as a linear function in $u$.
		
		Total variation is a \emph{local} regularizer in the sense that it only depends locally on the (distributional) derivative of its argument. It can thus only account for local noise, i.e. noise that is statistically independent at different points of the image.
		Although this weakness can be alleviated to some extent by employing non-local total variation \cite{gilboa:1005}, the inherent underlying assumption is often not satisfied: faulty observations caused by illumination changes or occlusion clearly have long range correlations. At the same time, in particular for the problem of object segmentation more detailed prior knowledge might be available that is not exploited by local regularizers: the shape of the sought-after object. A non-local regularizer that encourages the foreground region to have a particular shape is called a \emph{shape prior}.
		
		In this article we construct a shape prior by regularization of the foreground region with optimal transport. Hence, we interpret $u$ as the density of a measure $\nu$ w.r.t.~the Lebesgue measure $\mc{L}_Y$ on $Y$.
		The feasible set for $\nu$ will be:
		\begin{align}
			\label{eq:SegmentationMeasure}
			\segmeas(Y,M) = \left\{ \vphantom{\sum}
				\nu \in \meas(Y) \bcolon
				0 \leq \nu \leq \mc{L}_Y \wedge
				\nu(Y) = M
				\right\}
		\end{align}
		The first constraint ensures that $\nu \in \segmeas(Y,M)$ has a density which is a relaxed indicator function. The second constraint fixes the overall mass of $\nu$ to $M$. This is necessary to make it comparable by optimal transport.
		
	\subsection{Optimal Transport}
		\label{sec:OptimalTransport}
		For two spaces $X$ and $Y$, two probability measures $\mu \in \prob(X)$ and $\nu \in \prob(Y)$ and a cost function $c : X \times Y \rightarrow \R$ the optimal transport cost between $\mu$ and $\nu$ is defined by
		\begin{align}
			\label{eq:OTDefinition}
			D(c;\mu,\nu) & {} = \inf_{\pi \in \Pi(\mu,\nu)} \int_{X \times Y} c(x,y)\,d\pi(x,y)
			\intertext{where}
			\Pi(\mu,\nu) & {} = \left\{ \pi \in \prob(X \times Y) \bcolon {\project_X}_\sharp \pi = \mu 
				\wedge {\project_Y}_\sharp \pi = \nu \right\}
		\end{align}
		is referred to as the set of couplings between $\mu$ and $\nu$. It is the set of non-negative measures on $X \times Y$ with marginals $\mu$ and $\nu$ respectively.
		
		For $X=Y=\R^n$ and $c(x,y) = \|x-y\|^2$ one finds that
		\begin{align}
			\WD : \prob(\R^n)^2 \rightarrow \R, \qquad \WD(\mu,\nu) = \left(D(c;\mu,\nu)\right)^{1/2}		
		\end{align}
		is a metric on the space of probability measures on $\R^n$ with finite second order moments, called the 2-Wasserstein space of $\R^n$, here denoted by $\WS(\R^n)$.
		
		This space exhibits many interesting properties. For example, for two absolutely continuous measures $\mu, \nu \in \WS(\R^n)$ \eqref{eq:OTDefinition} has a unique minimizer $\hat{\pi}$, induced by a map $T : \R^n \rightarrow \R^n$, that takes $\mu$ onto $\nu$, i.e. $\nu = T_\sharp \mu$ and
		$\hat{\pi} = (\id,T)_\sharp \mu$ and the measure valued curve
		\begin{align}
			\label{eq:OTGeodesic}
			[0,1] \ni \lambda \mapsto \big( (1-\lambda) \id + \lambda \cdot T \big)_\sharp \mu
		\end{align}
		is a geodesic between $\mu$ and $\nu$ in $\WS(\R^n)$.
		This lead to the observation that the set of absolutely continuous measures in $\WS(\R^n)$ can informally be viewed as an infinite dimensional Riemannian manifold. The \emph{tangent space} at footpoint $\mu$ is represented by gradient fields
		\begin{align}
			\Tan_\mu \WS(\R^n) = \ol{\{ \nabla \varphi : \varphi \in \testFunctions(\R^n) \}}^{L^2(\mu)}
		\end{align}
		and the Riemannian inner product for two tangent vectors is given by the $L^2$ inner product w.r.t.~$\mu$:
		\begin{align}
			\label{eq:OTInnerProduct}
			\langle t_1, t_2 \rangle_{\mu} = \int_{\R^n} \langle t_1(x), t_2(x) \rangle_{\R^2} \,d\mu(x)
		\end{align}
		Analogous to \eqref{eq:OTGeodesic} first order variations of a measure $\mu$ along a given tangent vector $t$ are described by
		\begin{align}
			\label{eq:OTDeformation}
			\lambda \mapsto (\id + \lambda \cdot t)_\sharp \mu \, .
		\end{align}
		The Jacobian determinant of $T_\lambda = \id + \lambda \cdot t$ is
		\begin{align}
			\label{eq:OTJacobian}
			\det J_{T_\lambda} = 1 + \lambda \cdot \ddiv t + \mc{O}(\lambda^2)\,.
		\end{align}
		And by the change of variables formula the density of ${T_\lambda}_\sharp \mu$ is given by
		\begin{align}
			\label{eq:OTDensity}
			\dens\big({T_\lambda}_\sharp \mu \big)\big(T_\lambda(x)\big) =
				\dens(\mu)(x) \cdot \big(1 + \lambda \cdot \ddiv t(x)\big)^{-1} + \mc{O}(\lambda^2)\,.
		\end{align}
		
		Clearly the concept of optimal transport generalizes to non-negative measures of any (finite) mass, as long as the mass of all involved measures is fixed to be identical.
		An extensive introduction to optimal transport and the structure of Wasserstein spaces is given in \cite{Villani-OptimalTransport-09}. A nice review of the Riemannian viewpoint can be found in \cite{Ambrosio2013} and is further investigated in \cite{LottWassersteinRiemannian2008} for sufficiently regular measures.
		
		In this paper we will describe the template for our shape prior by a measure $\mu$ and model geometric and statistical variations of the shape by tangent vectors $t \in \Tan_\mu \WS(\R^2)$ and their induced first-order transformations \eqref{eq:OTDeformation}.
	
	\subsection{Contour Manifolds and Shape Measures}
		\label{sec:ShapeMeasures}
		The shape of an object can be described by parametrizing its outline contour. Let $S^1$ denote the unit circle in two dimensions. The set $\emb$ of smooth embeddings of $S^1$ into $\R^2$ can be treated as an infinite dimensional manifold. A corresponding framework is laid out in \cite{MichorGlobalAnalysis}, a short summary for shape analysis is given in \cite{Michor2006}. For various proposed metrics and implementations as shape priors see references in Sect.~\ref{sec:RelatedLiterature}. Here we give a very brief summary that aids the understanding of the paper. 

		The tangent space $\Tan_c \emb$ at a given curve $c \in \emb$ is represented by smooth vector fields on $S^1$, indicating first order deformation:
		\begin{align}
			\Tan_c \emb \simeq C^\infty(S^1,\R^2)
		\end{align}
		This linear structure is a useful basis for analysis of shapes, represented by closed simple contours, and construction of shape priors thereon (see Sect.~\ref{sec:RelatedLiterature}).
		
		Let $\diff$ denote the set of smooth automorphisms on $S^1$.
		In shape analysis one naturally wants to identify different parametrizations of the same curve. This can be achieved by resorting to the quotient manifold $\sfold = \emb / \diff$ of equivalence classes of curves, equivalence $c_1 \sim c_2$ between $ c_1, c_2 \in \emb$ given if there exists a $\varphi \in \diff$ such that $c_1 = c_2 \circ \varphi$. We write $[c]$ for the class of all curves equivalent to $c$.

		We summarize:
		\begin{align}
			\emb & \text{ : smooth embeddings } S^1 \rightarrow \R^2 \nonumber \\
			\diff & \text{ : smooth automorphisms on } S^1 \nonumber \\
			\sfold & \text{ : quotient } \emb / \diff
		\end{align}		
		
		One finds that for some $a \in \Tan_c \emb$ the component which is locally tangent to the contour corresponds to a first order change in parametrization of $c$. `Actual' changes of the shape can always be represented by scalar functions on $S^1$ that describe deformations which are locally normal to the contour:
		\begin{align}
			H_{c} \emb \simeq C^\infty(S^1,\R)
		\end{align}
		where $H$ indicates that this belongs to the \emph{horizontal} bundle on $\emb$ w.r.t.~the quotient $\sfold$. For smooth paths in $\emb$ one can always find an equivalent path such that the tangents lie in $H_c \emb$.
		While splitting off reparametrization is very elegant from a mathematical perspective, it remains a computational challenge when handling parametrized curves numerically (see for example \cite{MioShapeIJCV2007}).

		Alternatively, one can represent a shape by a probability measure with constant density support on the interior of the object. Such measures and their relation to contours have been investigated in \cite{SchmitzerSchnoerr-ShapeMeasures2013}. We will here recap the main results.
		For an embedding $c \in \emb$ denote by $\region(c)$ the region enclosed by the curve and let the map $\liftM : \emb \rightarrow \WS(\R^2)$ be given by
		\begin{align}
			\big(\liftM(c)\big)(A) = |\region(c)|^{-1} \cdot |A \cap \region(c)| \qquad \text{and} \qquad
			\int \phi\,d\liftM(c) = |\region(c)|^{-1} \int_{A \cap \region(c)} \phi\,dx
		\end{align}
		for measurable $A \subset \R^2$ and integrable functions $\phi$. The set $\smeas = \liftM(\emb)$ of measures is referred to as \emph{shape measures}.
		If $c_1 \sim c_2$ then obviously $\liftM(c_1) = \liftM(c_2)$, i.e.~different parametrizations of the same curve are mapped to the same measure.
		Thus one can define a map $\liftMB : \sfold \rightarrow \WS(\R^2)$ by $\liftMB([c]) = \liftM(c)$ for any representative $c$ of equivalence class $[c]$.
		
		Consider a smooth path $\lambda \mapsto c(\lambda)$ on $\emb$ with tangents $a(\lambda) = \frac{d}{d\lambda} c(\lambda) \in H_{c(\lambda)}\emb$. The derivative $\frac{d}{d\lambda} \liftM\big(c(\lambda)\big)$ can then be represented by a vector field $t(\lambda) \in \Tan_{\liftM(c(\lambda))} \WS(\R^2)$ in the distributional sense that for any test function $\phi \in \testFunctions(\R^2)$ one has
		\begin{align}
			\label{eq:ContoursLiftingCommutation}
			\frac{d}{d\lambda} \int \phi\,d\liftM\big(c(\lambda)\big) = \int \langle \nabla \phi,t(\lambda)\rangle_{\R^2}\,d\liftM\big(c(\lambda)\big)
			\,.
		\end{align}
		For a contour $c$ the measure tangent $t \in \Tan_{\liftM(c)} \WS(\R^2)$ at $\liftM(c)$ corresponding to a contour tangent $a \in H_c \emb$ at contour $c$ in the sense of \eqref{eq:ContoursLiftingCommutation}, one has on $\region(c)$ that $t = \nabla u$ where $u$ solves the Neumann problem
		\begin{subequations}
		\label{eq:ContoursTangentLifting}
		\begin{align}
			\Delta u & {} = C \quad \text{in} \quad \region(c), \qquad \qquad
			\frac{\partial u}{\partial n} = a \circ c^{-1} \quad \text{on} \quad \partial \region(c)
			\intertext{with $\frac{\partial}{\partial n}$ denoting the derivative in outward normal direction of the contour and}
			C & {} = |\region(c)|^{-1} \int_{\partial \region(c)} a \circ c^{-1}\,ds
		\end{align}
		is the normalized total flow of $a$ through the surface $\partial \Omega(c)$.
		\end{subequations}
		This maps $a$ to a uniquely determined $t$. We denote this map by $\liftC_c$ (depending on the basis contour $c$) and write $t=\liftC_c(a)$.
		
		Note that $t = \liftC_c(a)$ has constant divergence on $\region(c) = \spt \liftM(c)$. Hence by virtue of \eqref{eq:OTDensity} one finds to first order of $\lambda$ that $\mu(\lambda) = (\id + \lambda \cdot t)_\sharp F(c)$ has constant density on its support and is therefore itself a shape measure.

		So vector fields generated as $t = \liftC_c(a)$ can said to be tangent to the set $\smeas$ in $\WS(\R^2)$ and the former can informally be regarded as a submanifold of the latter. When equipped with the proper topology it becomes a manifold in the sense of \cite{MichorGlobalAnalysis} which is diffeomorphic to $\sfold$.
		
		This means that describing shapes via shape measures and appropriate tangent vectors thereon is mathematically equivalent to describing shapes by contours modulo parametrization and deformations.
		Thus we can \emph{construct shape priors for regularization with optimal transport}, based on measures, \emph{without any representation conversion during inference and without having to handle parametrization ambiguities numerically}.

\section{Regularization with Optimal Transport}
	\label{sec:OTRegularization}
	\subsection{Setup and Basic Functional}
		\label{sec:Setup}
		Let $Y \subset \R^2$ describe the image domain in which we want to locate and match the sought-after object. As discussed in Sect.~\ref{sec:Segmentation} we will describe the object location by a relaxed indicator function $u : Y \rightarrow [0,1]$. Since we want to use optimal transport for regularization, $u$ will be interpreted as density of a measure $\nu$.
		The feasible set for $\nu$ is given by $\segmeas(Y,M)$ as defined in \eqref{eq:SegmentationMeasure} where $M$ is the total mass of the reference measure which we use for regularization.
		
		Note that this is conceptually different from matching approaches where a certain local image feature (usually intensity or gray-level) is directly converted into a density. The limitations of this are discussed in \cite{TrouveFunctionalCurrents2014} in the context of `colored currents'. In brief, one problem is, for example, that only one dimensional features can be described. Another is, that, by converting features to density, different, a priori equally important image regions, are assigned different densities and thus have a different influence on the optimizer.
		
		We use the measure to indicate the \emph{location} of the sought-after object. Local image data is handled in a \emph{different} fashion: for this we introduce a suitable \emph{feature space} $\feat$. Depending on the image this may be the corresponding color space. It may however also be a more elaborate space spanned by small image patches or local filter responses.
		We then assume that any point $y \in Y$ is equipped with some $f_y \in \feat$ which we refer to as the \emph{observed feature}. We can thus consider every pixel to be a point in the enhanced space $Y \times \feat$ with coordinates $(y,f_y)$.

		For regularization with optimal transport we need to provide a prototype, referred to as \emph{template}. Let $X$ be a set whose geometry will model the shape of the object of interest. It will be equipped with a measure $\mu$ which should usually be the Lebesgue measure on $X$, having density $1$ everywhere, to indicate that `all of $X$ is part of the object'. The constant $M$ specifying the total mass for feasible segmentations $\nu$ will be the mass of $\mu$:
		\begin{align}
			M = \mu(X)
		\end{align}
		Additionally, we describe the \emph{appearance} of the template by associating to all elements $x \in X$ corresponding $f_x \in \feat$, the \emph{expected features}.

		We assume that both the template $X$ and the image domain $Y$ are embedded into $\R^2$. The squared Euclidean distance $\|x-y\|^2$ for $x \in X$ and $y \in Y$ then provides a geometric matching cost for points:
		\begin{align}
			\cGeo(x,y) = \|x-y\|^2
		\end{align}
		Moreover, we pick some function $c_\feat : \feat \times \feat \rightarrow \R$ which models the matching cost on the feature space. Possible choices for $c_\feat$ are for example a (squared) metric, or a Bayesian log-likelihood for observing a noisy feature $f_y$ when expecting feature $f_x$.
		
		Combining this, we can construct a functional for rating the plausibility of a segmentation proposal $\nu \in \segmeas(Y,M)$:
		\begin{align}
			\label{eq:SetupBasicFunctional}
			E(\nu) = \frac{1}{2} \inf_{\pi \in \Pi(\mu,\nu)}
				\int_{X \times Y} \left( \vphantom{\sum}
					\cGeo(x,y) + c_\feat(f_x,f_y)
				\right)\,d\pi(x,y) + 
				G(\nu)
		\end{align}				
		The first term is the minimal matching cost between the segmentation region and the template via optimal transport with a cost function that combines the geometry and appearance. The second term can contain other typical components of a segmentation functional, for example a local boundary regularizer (cf.~Sect.~\ref{sec:Segmentation}). The functional is illustrated in Fig.~\ref{fig:Functionals_E0}.

		\begin{remark}[Generality of functional]
			\label{rem:Generality}
			Although we describe here a continuous setup, numerically functional \eqref{eq:SetupBasicFunctional} can be applied to a wide range of different data structures. $X$ and $Y$ can be open sets in $\R^2$, describing continuous templates and images. Then $\mu$ would be, as indicated, the Lebesgue measure on $X$ and $\mc{L}_Y$ in \eqref{eq:SegmentationMeasure} would be the Lebesgue measure on $Y$.
			Alternatively $X$ and $Y$ could be discrete sets of pixels in $\R^2$ or point clouds in $\R^n$, then $\mu$ and $\mc{L}_Y$ should be chosen to be the respective uniform counting measures on $X$ and $Y$.
			If $X$ and $Y$ represent an over-segmentation of some data (i.e. super-pixels or voxels), $\mu$ and $\mc{L}_Y$ would be weighted counting measures, the weights representing the area/volume of each cell.
		\end{remark}

		\begin{remark}[Metric structure of $\WS(\R^2)$]
			\label{rem:CFeatAndMetric}
			Adding the term $c_\feat$ to the optimal transport cost breaks the geometric structure of $\WS(\R^2)$, therefore some readers may be hesitant about this step.
			However the measure $\nu$ is an unknown variable in the approach. Therefore numerical solvers that rely on the $\WS(\R^2)$-structure cannot be applied directly, even without the $c_\feat$ term. Instead we use discrete solvers in this paper, which can simultaneously optimize for $\nu$ and $\pi$. So $c_\feat$ does not add any computational complexity whereas we gain significantly more modelling flexibility.
			Additionally, when one chooses $c_\feat$ to be a squared metric on $\feat$, then one is working on $\WS(\R^2 \times \feat)$, which also exhibits a metric structure.
		\end{remark}

		\paragraph{Limitations of the Basic Functional.}		
		Functional \eqref{eq:SetupBasicFunctional} has three major shortcomings for the application of object segmentation and shape matching, related to the choice of the embedding $X \rightarrow \R^2$:
		\begin{enumerate}[(i)]
			\item The location and orientation of the sought-after object are often unknown beforehand. Hence, a segmentation method should be invariant under Euclidean isometries, which is clearly violated by picking an arbitrary embedding $X \rightarrow \R^2$.
			If $\mu$ and  $\nu$ were fixed measures in $\meas(\R^2)$ with equal mass, then the optimal coupling for $\WD(\mu,\nu)$ would be invariant under translation (up to an adjustment of the coordinates according to the translation, of course). However, since in this application $\nu$ is not fixed this quasi-invariance cannot be exploited. Also, there is no similar invariance w.r.t.~rotation. 
			\item Any non-isometric deformation between template foreground and the object will be uniformly penalized by the geometric part of the corresponding optimal transport cost. No information on more or less common deformations (learned from a set of training samples) can be encoded.
			\item Since the mass $M$ of $\mu$, related to the size of the template $X$, equals the mass of $\nu$, this determines the size of the foreground object in $Y$. Hence, the presented functionals imply that one must know the scale of the sought-after object beforehand. This is not possible in all applications.
		\end{enumerate}
		In the next sections we will discuss how to overcome these obstacles.
		By making the embedding $X \rightarrow \R^2$ flexible, the resulting functionals become fit for (almost) isometry invariance, can handle prior information on more or less common non-isometric deformations and can dynamically adjust the object scale.

	\subsection{Wasserstein Modes}
		\label{sec:WassersteinModes}
		To overcome the limitations listed in Sect.~\ref{sec:Setup} we will allow $X$ to move and be deformed within $\R^2$. We choose the following family of embeddings:
		\begin{align}
			\label{eq:WassersteinModesTransformations}
			\XEmb_\XCoef : X \rightarrow \R^2, \qquad \XEmb_\XCoef(x) = x + \sum_{i=1}^n \XCoef_i \cdot \XMode_i(x), \qquad t_i \in T_\mu \WS(\R^2)
		\end{align}
		The transformation is parametrized by the coefficients $\XCoef \in \R^n$. This linear decomposition will allow enough flexibility for modelling while keeping the resulting functionals amenable.
		We refer to the basis maps $\{\XMode_i\}_{i=1}^n$ as \emph{modes}.
		Including the coefficients $\XCoef$ as degrees of freedom into \eqref{eq:SetupBasicFunctional} yields:
		\begin{align}
			E(\XCoef,\nu) = {} & \frac{1}{2} \inf_{\pi \in \Pi(\mu,\nu)}
				\int_{X \times Y} \left( \vphantom{\sum}
					\cGeo\big(\XEmb_\XCoef(x),y\big) + c_\feat(f_x,f_y)
				\right)\,d\pi(x,y) + {} \nonumber \\
				& \quad F(\XCoef) + G(\nu)
			\label{eq:WassersteinModesFunctional}
		\end{align}
		The function $F$ can be used to introduce statistical knowledge on the distribution of the coefficients $\XCoef$. The enhanced functional is illustrated in Fig.~\ref{fig:Functionals_E1}.
		
		\begin{figure}
			\centering
			\subfloat[]{
			\label{fig:Functionals_E0}
			\resizebox{!}{3.2cm}{
			\begin{tikzpicture}
				\begin{scope}[xshift=3cm]
					\draw[] (-0.7,-0.7) rectangle (1.2,2.7);
				\end{scope}
				\path[fill=black!10!white] (0.6,-0.3) -- (3,-0.5) -- (3.1,2.5) -- (0.6,2) -- cycle;
				\draw[double,fill=blue!30!white] (0,0) to[out=-30,in=180] (0.6,-0.3) to[out=0,in=210] (1,-0.2) to[out=30,in=-80] (1.5,1) to[out=100, in=-10] (0.6,2) to[out=170,in=60] (-0.5,1.5) to[out=240,in=150] (0,0) -- cycle;
				\begin{scope}[xshift=3cm]
					\draw[fill=red!30!white] (0,-0.5) to[out=0,in=-90] (1,1.2) to[out=90,in=-45] (0.5,1.8) to[out=135,in=0] (0.1,2.5) to[out=180,in=90] (-0.3,1.6) to[out=-90,in=90] (-0.5,0.8) to[out=-90,in=180] (0,-0.5) --cycle;
				\end{scope}
				\node at (0.5,-0.95) [anchor=center]{$X,\mu$};
				\node at (1.8,1.9) [anchor=center]{$\pi$};
				\node at (3.25,-0.95) [anchor=center]{$Y,\nu$};
			\end{tikzpicture}
			}
			}
			\hskip 0.5cm
			\subfloat[]{
			\label{fig:Functionals_E1}
			\resizebox{!}{3.2cm}{
			\begin{tikzpicture}
				\begin{scope}[xshift=6cm]
					\draw[] (-0.7,-0.7) rectangle (1.2,2.7);
				\end{scope}
				\path[fill=black!10!white] (3.7,-0.31) -- (6,-0.5) -- (6.1,2.5) -- (3.4,2.2) -- cycle;
				\draw[double,fill=blue!30!white] (0,0) to[out=-30,in=180] (0.6,-0.3) to[out=0,in=210] (1,-0.2) to[out=30,in=-80] (1.5,1) to[out=100, in=-10] (0.6,2) to[out=170,in=60] (-0.5,1.5) to[out=240,in=150] (0,0) -- cycle;
				\begin{scope}[xshift=3cm]
					\draw[fill=green!30!white] (0,0) to[out=-30,in=180] (0.7,-0.31) to[out=0,in=210] (0.8,-0.3) to[out=30,in=-80] (1.2,1.2) to[out=100, in=0] (0.4,2.2) to[out=180,in=60] (-0.3,1.6) to[out=240,in=150] (0,0) -- cycle;
				\end{scope}
				\begin{scope}[xshift=6cm]
					\draw[fill=red!30!white] (0,-0.5) to[out=0,in=-90] (1,1.2) to[out=90,in=-45] (0.5,1.8) to[out=135,in=0] (0.1,2.5) to[out=180,in=90] (-0.3,1.6) to[out=-90,in=90] (-0.5,0.8) to[out=-90,in=180] (0,-0.5) --cycle;
				\end{scope}
				\draw[->] (0.7,0.3) -- (3.7,0.2);
				\draw[->] (0.2,0.9) node [draw=black,shape=circle,fill=black,inner sep=0.5pt,label=left:$x$] {} -- (3.0,1.0) node [draw=black,shape=circle,fill=black,inner sep=0.5pt,label=right:$\XEmb_\XCoef(x)$] {};
				\draw[->] (0.55,1.4) -- (3.55,1.7);
				\node at (0.5,-0.95) [anchor=center]{$X,\mu$};
				\node at (1.925,1.9) [anchor=center]{$\XEmb_\XCoef$};
				\node at (3.35,-0.95) [anchor=center]{$\XEmb_{\XCoef\,\sharp}\,\mu$};
				\node at (4.8,1.9) [anchor=center]{$\pi$};
				\node at (6.25,-0.95) [anchor=center]{$Y,\nu$};
			\end{tikzpicture}
			}
			}
			\caption{Illustration of functionals $E(\nu)$, eq. \protect\eqref{eq:SetupBasicFunctional}, and $E(\XCoef,\nu)$, eq. \protect\eqref{eq:WassersteinModesFunctional}: %
			\protect\subref{fig:Functionals_E0} The segmentation in $Y$ is described by measure $\nu$ which is regularized by the Wasserstein distance to a template measure $\mu$, living on $X$. This simple approach introduces strong bias, depending on the relative location of $X$ and $Y$, and lacks the ability to explicitly model typical object deformations. %
			\protect\subref{fig:Functionals_E1}	In the enhanced functional the template measure $\mu$ is deformed by the map $\XEmb_\XCoef$, resulting in the push-forward $\XEmb_{\XCoef\,\sharp}\,\mu$. The segmentation $\nu$ is then regularized by its Wasserstein distance to $\XEmb_{\XCoef\,\sharp}\,\mu$. The corresponding optimal coupling $\pi$ gives a registration between the foreground part of the image and the deformed template.}
			\label{fig:Functionals}
		\end{figure}
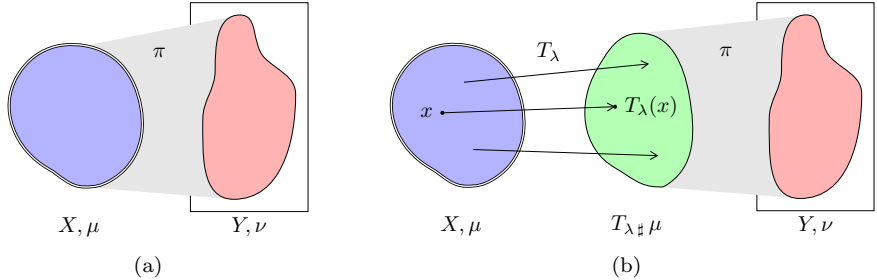

		Functional \eqref{eq:WassersteinModesFunctional} is generally non-convex. For fixed $\XCoef$ it is convex in $\nu$.
		For fixed $\nu$ and a fixed coupling $\pi$ in the optimal transport term it is convex in $\XCoef$ if transformations are of the form \eqref{eq:WassersteinModesTransformations} and $F$ is convex.
		Joint non-convexity does not come as a surprise. It is in fact easy to see that a meaningful isometry invariant segmentation functional with explicitly modelled transformations is bound to be non-convex (Fig.~\ref{fig:Nonconvex}).
		\begin{remark}[Eliminating $\nu$]
			\label{rem:WassersteinModesBnB}
			For optimization of \eqref{eq:WassersteinModesFunctional} assume we first eliminate the high-dimensional variable $\nu$ through minimization (which is a convex problem). One is then left with:
			\begin{align}
				\label{eq:WassersteinModesXCoefFunctional}
				E_1(\XCoef) = \inf_{\nu \in \segmeas(Y,M)} E(\XCoef,\nu)
			\end{align}
			This is in general non-convex, but the dimensionality of $\XCoef$ is typically very low (of the order of 10). We can thus still hope to find globally optimal solutions by means of non-convex optimization. 
			We will present a corresponding branch and bound scheme in Sect.~\ref{sec:OptimizationBnB}.
		\end{remark}		
		\begin{remark}[Modelling transformations in feature space]
			\label{rem:cFeatTransform}
			When the feature space $\feat$ has an appropriate linear structure a natural generalization of \eqref{eq:WassersteinModesTransformations} is to not only model geometric transformations of $X$ but also of the expected features $f_x$. In analogy to \eqref{eq:WassersteinModesTransformations} consider
			\begin{align}
				\label{eq:cFeatTransformTransform}
				\XEmbFeat_\XCoef : X \rightarrow \R^2 \times \feat, \qquad \XEmbFeat_\XCoef(x) = (x,f_x) + \sum_{i=1}^n \XCoef_i \cdot \XModeFeat_i(x)
			\end{align}
			where $\XEmbFeat_0(x) = (x,f_x)$ returns the original position and expected feature of a point. The modes $\XModeFeat_i : X \rightarrow \R^2 \times \feat$ can then be used to \emph{alter both the geometry of $X$ as well as its appearance}.
			
			This will be useful when the appearance of the object is known to be subject to variations or when a feature is affected by geometric transformations: for example the expected response to an oriented local filter will need to be changed when the object is rotated.
			The corresponding \emph{generalized functional} is
			\begin{align}
				E_\feat(\XCoef,\nu) & = \frac{1}{2} \inf_{\pi \in \Pi(\mu,\nu)}
					\int_{X \times Y} \hat{c}\big(\XEmbFeat_\XCoef(x),(y,f_y)\big)\,d\pi(x,y) + 
					F(\XCoef) + G(\nu) \\
				\intertext{with}
				\label{eq:cFeatTransformCost}
				\hat{c} & : (\R^2 \times \feat)^2 \rightarrow \R, \qquad \hat{c}\big((x',f'_x),(y,f_y)\big) = \cGeo(x',y) + c_\feat(f'_x,f_y)\,.
			\end{align}
			We will further study this generalization in Sect.~\ref{sec:Experiments}. Meanwhile, for the sake of simplicity we constrain ourselves to purely geometric modes.
		\end{remark}

		\begin{figure}
			\centering
			\begin{tikzpicture}[xnode/.style={}]
				\begin{scope}[xshift=0cm,yshift=0cm]
					\path[fill=black!20!white] plot file{c4-2_f1-indicator_f2b.txt} -- cycle;
					\draw[line width=1pt,draw=red!50!black] plot file{c4-2_f1-indicator_f1.txt} -- cycle;
				\end{scope}
				\begin{scope}[xshift=4cm,yshift=0.6cm]
					\path[fill=black!20!white] plot file{c4-2_f1-indicator_f2b.txt} -- cycle;
					\draw[line width=1pt,draw=red!50!black] plot file{c4-2_f1-indicator_f1.txt} -- cycle;
				\end{scope}
				\coordinate (x1) at (0,0);
				\coordinate (x2) at (4,0.6);	
				\coordinate (origin) at (2,-1.5);
				\draw[dashed] (x1) -- (x2);
				\draw[->] (origin) -- coordinate (l1) (x1);
				\draw[->] (origin) -- coordinate (l2) (x2);
				\node at (l1) [below]{$t_1$};
				\node at (l2) [below=0.1cm]{$t_2$};
			\end{tikzpicture}
			\caption[Explicit transformation variables and non-convexity]{Explicit transformation variables and non-convexity. Gray shading indicates `foreground features'. Placing the template (red contour) at $t_1$ or $t_2$ yields equally good hypotheses. Were the prior functional convex in the translation variable, any point along the line $(1- \alpha) \cdot t_1 + \alpha \cdot t_2$ for $\alpha \in [0,1]$ would yield an at least equally good proposal, which is clearly unreasonable.}
			\label{fig:Nonconvex}
		\end{figure}
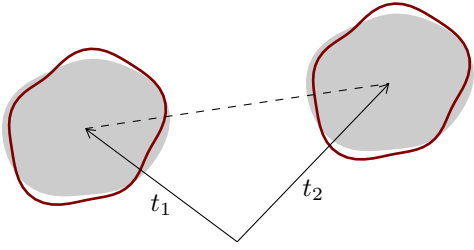

		In this paper we assume that $X = \region(c)$ for some $c \in \emb$ (cf.~Sect.~\ref{sec:ShapeMeasures}).
		As pointed out, for a meaningful template $\mu$ should be the Lebesgue measure on $X$ with constant density $1$, so $\mu$ is a (rescaled) shape measure.
		The modes $\{\XMode_i\}_i$ span a subspace of $\Tan_\mu \WS(\R^2)$ in which $\XCoef$ parametrizes a first-order deformation. We will choose $\XMode_i \in \Tan_\mu \smeas$, i.e.~tangents to the manifold of shape measures. This is equivalent to the tangent space approximation of the contour manifold $\sfold$ modulo parametrizations.
		We need to take into account how transforming $X$ through $\XEmb_\XCoef$ alters $\mu$.
		We discussed earlier that according to \eqref{eq:OTDeformation} the density of ${\XEmb_\XCoef}_\sharp \mu$ remains constant to first order.		
		Modes with non-zero divergence will lead to a density which is not $1$. Hence, ${\XEmb_\XCoef}_\sharp\,\mu$ must be rescaled accordingly, which will change its total mass and thus influence the corresponding feasible set $\segmeas(Y,M)$ for $\nu$. This will require some additional care during optimization.
		All constant-divergence modes can be decomposed into zero-divergence modes plus an additional `scale mode' (see Sect.~\ref{sec:GeometricInvariance}). We will thus see to it that all but one mode will have zero divergence and handle the scale mode with particular care (Sect.~\ref{sec:Optimization}).

	\subsection{Geometric Invariance}
		\label{sec:GeometricInvariance}
		The framework provided by transformations \eqref{eq:WassersteinModesTransformations} and functional \eqref{eq:WassersteinModesFunctional} allows to introduce geometric invariance into the segmentation\,/\,matching approach. In this section we will consider translations, (approximate) rotations and scale transformations. Scale transformations will play a special role as they change the mass of the template.
		
		The transformations will be modelled with the generators of the corresponding (local) Lie group acting on $\R^2$. Likewise invariance w.r.t.~transformation Lie groups could be introduced into matching functionals on other manifolds.
		
		\paragraph{Translation and Rotation.}
			If one chooses modes
			\begin{align}
				\XMode_{\tn{t}1}(x) = (1,0)^\T, \qquad
				\XMode_{\tn{t}2}(x) = (0,1)^\T
			\end{align}
			the corresponding coefficients $\XCoef_{\tn{t}1}, \XCoef_{\tn{t}2}$ parametrize translations of the template.
			Further, let $R(\phi)$ be the 2-dimensional rotation matrix by angle $\phi$. Then the mode
			\begin{align}
				\label{eq:RotationMode}
				\XMode_{\tn{r}}(x) = \left. \frac{d}{d\phi} R(\phi)\right|_{\phi=0}\,x = 
					(-x_2,x_1)^\T
			\end{align}
			will approximately rotate the template.\footnote{Note that $\XMode_{\tn{r}}$ is not a gradient field and thus $\notin T_\mu \WS(\R^2)$. One could find a corresponding gradient version by lifting the rotation field from the contour to the interior, Sect.~\ref{sec:ShapeMeasures}. However the functional is also meaningful with this non-gradient mode and its effect on the template is more intuitive.} This first order expansion works satisfactory for angles up to about $\pm 30^\circ$. We will consider larger rotations in the experiments, Sect.~\ref{sec:Experiments}.
			
			Note that $\XMode_{\tn{t}1},\XMode_{\tn{t}2}$ and $\XMode_{\tn{r}}$ have zero divergence. Hence, to first order the implied transformations do not alter the density of $\mu$.
			For explicit invariance under translations and rotations the modelling function $F$ in \eqref{eq:WassersteinModesFunctional} should be constant w.r.t.~the coefficients $\XCoef_{\tn{t}1}, \XCoef_{\tn{t}2}$ and $\XCoef_{\tn{r}}$.
			
		\paragraph{Scale.}
			The size of $X$ and $\mu$ determines the size of the object within the image. In many applications the scale is not known beforehand, thus dynamical resizing of the template during the search is desirable. With slight extensions the framework of transformations can be employed to introduce as a scale-mode into the approach. Let
			\begin{align}
				\XMode_{\textnormal{s}}(x) = x\,.
			\end{align}
			By the change of variable formula (cf.~(\ref{eq:OTJacobian},\ref{eq:OTDensity})) the density of ${\XEmb_\XCoef}_\sharp\,\mu$ is given by
			\begin{align}
				\dens \left( \vphantom{\sum}
					{\XEmb_\XCoef}_\sharp\,\mu
				\right)\big( \XEmb_\XCoef(x) \big) &  = \dens(\mu)(x) \cdot \big(\det J_{\XEmb_\XCoef}(x)\big)^{-1}\,.\\
				\intertext{By plugging in the scale mode $\XMode_{\textnormal{s}}$ and ignoring other modes, which due to zero divergence do not contribute to first order, we find in 2 dimensions:}
				& = (1+\XCoef_{\textnormal{s}})^{-2}
			\end{align}
			Thus, introducing a scale mode into \eqref{eq:WassersteinModesFunctional} yields
			\begin{align}
				\label{eq:GeometryScaleFunctional}
				E_{\textnormal{s}}(\XCoef,\nu) = & \frac{1}{2\,(1 + \XCoef_{\textnormal{s}})^2}
					\inf_{\pi \in \Pi\big((1+\XCoef_{\textnormal{s}})^2 \cdot \mu,\nu\big)}
					\int_{X \times Y} \left( \vphantom{\sum}
						\cGeo\big(\XEmb_\XCoef(x),y) + c_\feat(f_x,f_y)
					\right)\,d\pi(x,y)\nonumber \\
					& \qquad {} + F(\XCoef) + G(\nu)
			\end{align}
			where we have scaled $\mu$ by the appropriate factor in the feasible set for $\pi$ and we have normalized the first term by a factor of $(1 + \XCoef_{\textnormal{s}})^{-2}$ to make the term scale invariant. Depending on whether scale invariance is desired the terms $F(\XCoef)$ and $G(\nu)$ may need to be rescaled appropriately, too.
			The feasible set for $\nu$ in $E_{\textnormal{s}}$ is $\segmeas\big(Y, (1+\XCoef_{\textnormal{s}})^2 \cdot M\big)$.
			
			While the modes for translation and rotation leave the area of the template unaltered, statistical deformation modes that we learn from sample data will in general have non-zero divergence.
			Handling changes in mass will require some extra care during optimization. Therefore we will decompose such modes into a divergence-free part and a contribution of the scale-component.

	\subsection{Statistical Variation}
		\label{sec:StatisticalVariation}
		One of the limitations of \eqref{eq:SetupBasicFunctional} discussed in Sec.~\ref{sec:Setup} is that non-isometric variations of the template object are uniformly penalized by the geometric component of the corresponding optimal transport cost.
		However, not all deformations with the same optimal transport cost are equally likely. It may be necessary to reweigh the distance to more accurately model common and less common deformations.

		For contour based shape priors a model of statistical object variations is typically learned from samples in a tangent space approximation of the contour manifold.
		In \cite{OptimalTransportTangent2012} the tangent space approximation to the Wasserstein space $\WS$ was used to analyze typical deformations in a dataset of densities.
		But mimicking the learning procedure on the contour manifold with optimal transport involves some unsolved problems.

		\begin{enumerate}[(i)]
			\item The first problem is to find an appropriate footpoint for the tangent space approximation, i.e. a point by the associated tangent space of which we want to approximate the manifold to first order.
			One should pick a point which is close to all training samples. Typically one chooses a suitable mean, in a more general metric setting the natural generalization is the Karcher mean.
			Computation of the barycenter on $\WS$ is a non-trivial problem \cite{WassersteinBarycenterrohtua}, which has recently been made more accessible through Entropy smoothing \cite{Cuturi14Barycenters}. However it becomes more involved when one wants to take geometric invariances into account and impose the constraint of constant density on the support.
			In \cite{OptimalTransportTangent2012} the $L^2$-mean of the density functions was picked as footpoint after aligning the centers of mass and the principal axes of the samples.
			Though this is not necessarily an ideal choice (the $L^2$-mean of the densities can be very far from some of the samples) it seems to work for smooth densities with limited variations. It will not extend to the binary densities that we consider in this paper since their $L^2$-mean need not be binary.
			In \cite{SchmitzerSchnoerr-EMMCVPR2013} the problem was tentatively solved by manually picking a `typical' sample from the training set as the footpoint.
		
			\item The second problem is how one maps the samples into the tangent space of the footpoint. A natural choice is the logarithmic map, or some approximation thereof. 
			Recall from Sect.~\ref{sec:OptimalTransport} that tangent vectors on the manifold of measures are curl-free vector fields and that the logarithmic map is basically obtained by taking the relative transport map. There are some issues with the application to object segmentation:
			The vector fields computed by the logarithmic map need not have constant divergence, although fluctuations are typically small enough to be ignored for practical purposes. A second issue is that the vector fields are in general not smooth between measures with non-smooth densities, as in our case. This leads to unreasonable interpolations and unwanted artifacts during statistical analysis of the vector fields representing the sample set.
		\end{enumerate}
			
		In this paper we circumvent both problems by employing the diffeomorphism between the manifold of contours and the manifold of shape measures (see Sect.~\ref{sec:ShapeMeasures}). This allows us to outsource the shape learning problem to the contour representation where established methods for finding a good mean and tangent vectors are available (for example \cite{MioShapeIJCV2007}).
		
		Concretely we used the contour metric and the corresponding approximate algorithmic framework based on gradient descent and dynamic programming presented in \cite{MioShapeIJCV2007} for computing the Karcher mean of a set of training shapes and for mapping the training-samples onto the tangent space at the mean via the logarithmic map. We then performed a principal component analysis w.r.t.~the Riemannian inner product to extract the dominating modes of shape variation within the training set, together with their observed standard deviation $\{(t_i,\sigma_i)\}$.
		The results we obtained were stable under choosing different initializations. Learning of the class `starfish' is illustrated in Fig.~\ref{fig:ContourLearning}.
		The standard deviations $\sigma_i$ were then used to define $F(\XCoef)$ to model a Gaussian distribution on the statistical mode parameters:
		\begin{align}
			\label{eq:FDefinition}
			F(\XCoef) = \frac{\gamma}{2} \sum_{i=1}^{n_{\tn{stat}}} \left(\frac{\XCoef_i}{\sigma_i}\right)^2
		\end{align}
	where $\gamma$ is a parameter determining the weight of $F$ w.r.t.~the other functional components.
	
		\begin{figure}
			\centering
			\begin{tabular}{cc}
			\includegraphics[width=5cm]{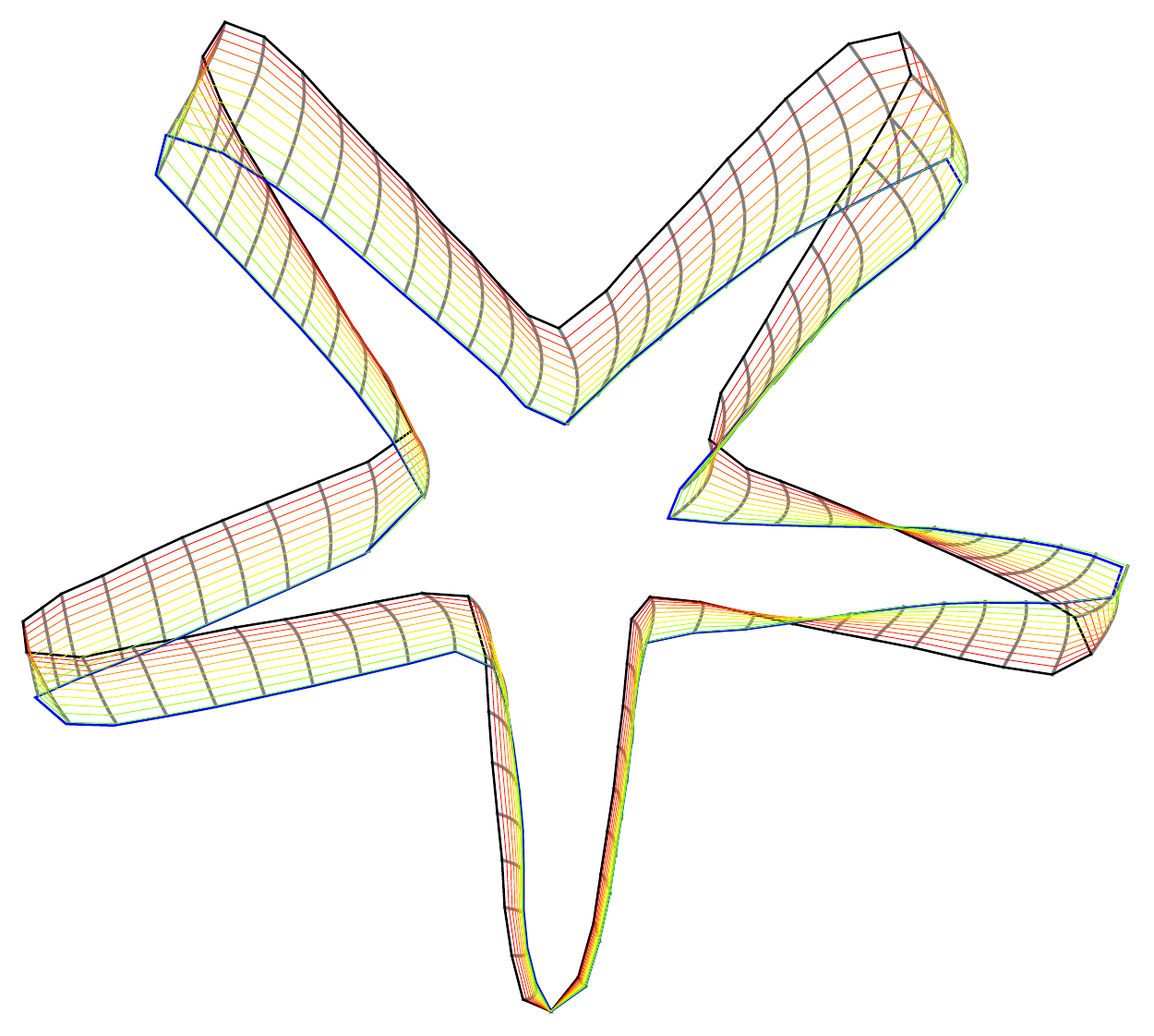} &
			\includegraphics[width=5cm]{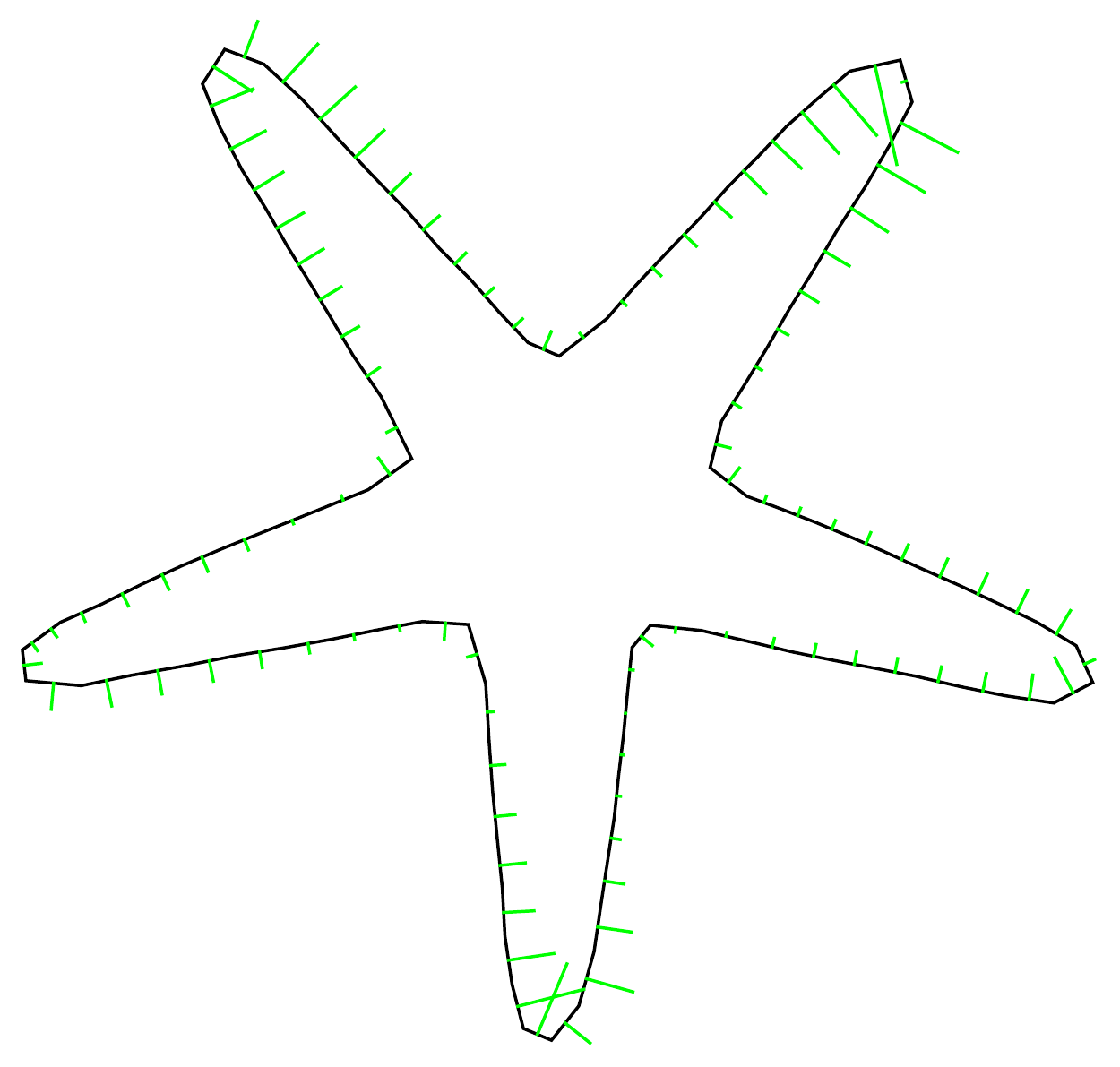} \\
			\includegraphics[width=5cm]{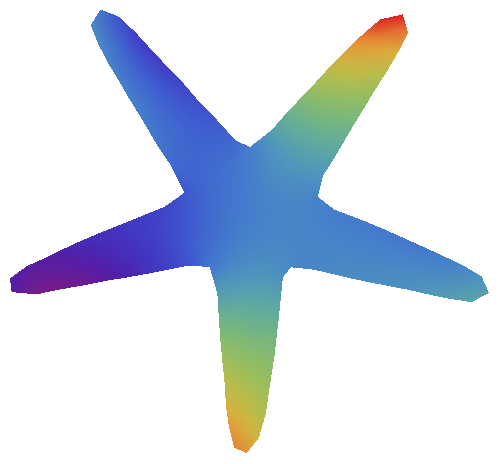} &
			\includegraphics[width=5cm]{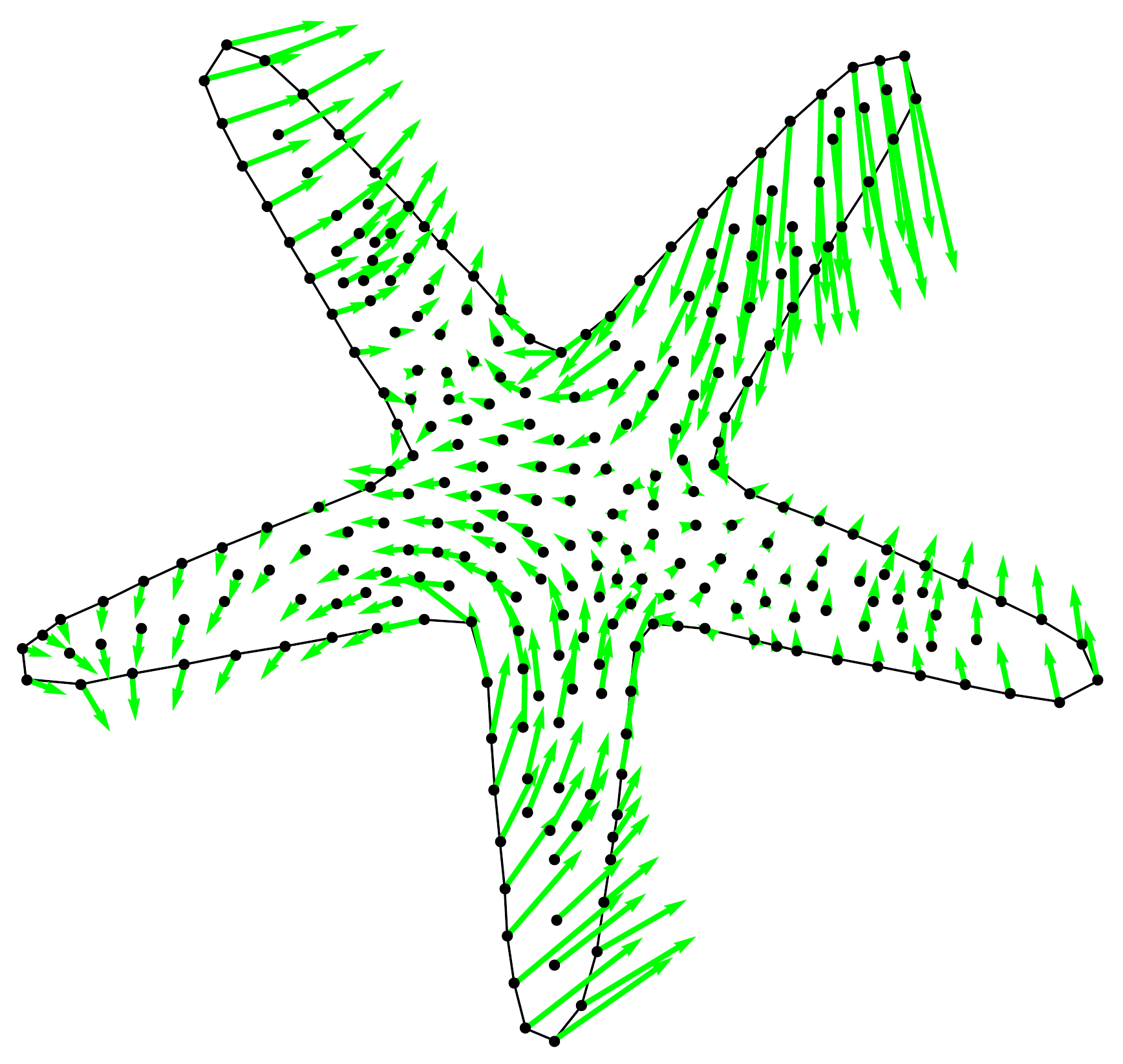} \\
			\end{tabular}
			\caption{Learning of contours. \textbf{Top left:} geodesic from shape mean to a training sample. %
			\textbf{Top right:} Normal contour deformation of first principal component of training samples. %
			\textbf{Bottom left:} Potential function $u$ for lifting the deformation to the full region (see \protect\eqref{eq:ContoursTangentLifting}). %
			\textbf{Bottom right:} Gradient field which gives deformation mode for whole template region.
			}
			\label{fig:ContourLearning}
		\end{figure}

	\subsection{Background Modelling}
		\label{sec:BackgroundModelling}
		The previous sections describe how to model the sought-after object via a template, i.e.~they focus on the image foreground. Let us now briefly comment on the background.
		
		Sometimes information on the expected appearance of the background is available. This can be incorporated by a linear contribution to $G$ \eqref{eq:WassersteinModesFunctional}:
		\begin{align}
			G(\nu) = \int_Y g(y)\,d\nu(y)
		\end{align}
		where a positive (negative) coefficient $g(y)$ indicates that a given point is likely to be part of the background (foreground) (cf.~Sect.~\ref{sec:Segmentation}). Such linear terms can be absorbed into the optimal transport term:
		\begin{align}
			\int_Y g(y)\,d\nu(y)	 = \int_{X \times Y} g(y)\,d\pi(x,y)
		\end{align}
		That is, the background appearance model leads to an effective shift of the foreground assignment costs: $c(x,y) \rightarrow c(x,y) + g(y)$.
		
		In other situations it may be desirable to impose that the region directly around the foreground object does not look like foreground itself.
		An example for such a situation and the corresponding solution are discussed with numerical examples in Sect.~\ref{sec:Experiments}, see Fig.~\ref{fig:2}.

\section{Optimization}
	\label{sec:Optimization}
	\subsection{Alternating Optimization}
		\label{sec:OptimizationAlternating}
		Functional \eqref{eq:WassersteinModesFunctional} is generally non-convex. It is convex in $\nu$ for fixed $\XCoef$ and it is convex in $\XCoef$ under suitable conditions (see Sect.~\ref{sec:WassersteinModes}).
		Based on this, an alternating optimization scheme is conceivable for divergence-free modes.
		This has also been proposed in \cite[Sect.~3.2.1]{Guibas-EMDTransform99}.
		We require the following reformulation of \eqref{eq:WassersteinModesXCoefFunctional}:

		\begin{remark}[Coupling reformulation]
			\label{rem:CouplingReformulation}
			Computing \eqref{eq:WassersteinModesXCoefFunctional} involves a nested optimization problem over $\nu \in \segmeas(Y,M)$ and then $\pi \in \Pi(\mu,\nu)$.
			Given a coupling $\pi \in \Pi(\mu,\nu)$ the marginal $\nu$ can be reconstructed via projection: $\nu = {\project_Y}_\sharp \pi$. This allows to reformulate the optimization of \eqref{eq:WassersteinModesXCoefFunctional} directly in terms of couplings. Let
			\begin{align}
				\hat{E}(\XCoef,\pi) = \frac{1}{2}
				\int_{X \times Y} \left( \vphantom{\sum}
					\cGeo\big(\XEmb_\XCoef(x),y\big) + c_\feat(f_x,f_y)
				\right)\,d\pi(x,y) + 
				F(\XCoef) + G({\project_Y}_\sharp \pi)
			\end{align}
			and let the feasible set for $\pi$ in $\hat{E}$ be
			\begin{align}
				\segcoupl(Y,\mu) = & \bigcup_{\nu \in \segmeas(Y,M)} \Pi(\mu,\nu) \nonumber \\
				= & \left\{ \pi \in \meas(X \times Y) \bcolon {\project_X}_\sharp \pi = \mu \wedge {\project_Y}_\sharp \pi \leq \mc{L}_Y \right\}\,.				
			\end{align}
			Then for fixed $\XCoef$ one has by construction
			\begin{align}
				\inf_{\nu \in \segmeas(Y,M)} E(\XCoef,\nu) = \inf_{\pi \in \segcoupl(Y,\mu)} \hat{E}(\XCoef,\pi)
			\end{align}
			and for any optimizer $\pi^\ast$ of $\hat{E}$ the marginal ${\project_Y}_\sharp \pi^\ast$ is an optimizer of $E$.
		\end{remark}
		
		Functional $\hat{E}(\XCoef,\pi)$ is separately convex in $\XCoef$ and $\pi$ for transformations of the form \eqref{eq:WassersteinModesTransformations} and convex $F$.
		For some initial $\XCoef^1$ consider the following sequence for $k=1,2,\ldots$:
		\begin{subequations}
			\label{eq:OptimizationAlternating}
			\begin{align}
				\pi^{k} & \in \argmin_{ \pi \in \segcoupl(Y,\mu)} \hat{E}(\XCoef^{k},\pi)
				\label{eq:OptimizationAlternatingNu} \\
				\XCoef^{k+1} & \in \argmin_{\XCoef \in \R^n} \hat{E}(\XCoef,\pi^{k})
				\label{eq:OptimizationAlternatingXCoef}
			\end{align}
		\end{subequations}
		\begin{proposition}
			The sequence of energies $\hat{E}(\XCoef^{1},\pi^{1}) \rightarrow \hat{E}(\XCoef^{2},\pi^{1}) \rightarrow \hat{E}(\XCoef^{2},\pi^{2}) \rightarrow \ldots$ is non-increasing and converges.
		\end{proposition}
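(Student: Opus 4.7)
The plan is to verify the two standard properties that govern any alternating minimization scheme: monotone decrease along the iterates, and boundedness from below of the objective.

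First I would show the sequence is non-increasing by reading off each transition from the defining optimality conditions of the alternating updates. The step $\hat{E}(\XCoef^{k},\pi^{k-1}) \to \hat{E}(\XCoef^{k},\pi^{k})$ does not increase the value because $\pi^{k}$ is chosen in \eqref{eq:OptimizationAlternatingNu} as a minimizer of $\pi \mapsto \hat{E}(\XCoef^{k},\pi)$ over $\segcoupl(Y,\mu)$, and $\pi^{k-1}$ is a feasible competitor (it lies in $\segcoupl(Y,\mu)$ by construction, independent of the value of $\XCoef$). Likewise the step $\hat{E}(\XCoef^{k},\pi^{k}) \to \hat{E}(\XCoef^{k+1},\pi^{k})$ does not increase the value because $\XCoef^{k+1}$ is chosen in \eqref{eq:OptimizationAlternatingXCoef} as a minimizer of $\XCoef \mapsto \hat{E}(\XCoef,\pi^{k})$ over $\R^n$, with $\XCoef^{k}$ being admissible. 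Chaining these two inequalities gives the claimed monotonicity of the alternating sequence.

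Next I would show that $\hat{E}$ is bounded below on its domain, which combined with monotonicity yields convergence by the standard monotone convergence theorem on $\R$. The optimal transport integrand $\tfrac{1}{2}(\cGeo(\XEmb_\XCoef(x),y) + c_\feat(f_x,f_y))$ is non-negative (the geometric cost is a squared distance, and $c_\feat$ can be assumed non-negative, or at worst bounded below, by the modelling choices in Sect.~\ref{sec:Setup}), so the transport term is bounded below uniformly. For the statistical prior $F$ defined in \eqref{eq:FDefinition} we have $F(\XCoef)\geq 0$, and for the segmentation term $G$ any of the background models considered (including the linear term from Sect.~\ref{sec:BackgroundModelling}, which is bounded below because $\pi$ has bounded mass) is bounded below as well. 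Hence $\hat{E}(\XCoef,\pi) \geq C$ uniformly on $\R^n \times \segcoupl(Y,\mu)$ for some finite constant $C$, and the monotone non-increasing sequence therefore converges to some limit $\geq C$.

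The only mildly delicate point is that the alternating updates \eqref{eq:OptimizationAlternating} be well-defined, i.e.~that the respective argmin sets are non-empty at every iteration. For \eqref{eq:OptimizationAlternatingNu} this follows from weak-$\ast$ compactness of $\segcoupl(Y,\mu)$ and lower semicontinuity of the transport objective in $\pi$, and for \eqref{eq:OptimizationAlternatingXCoef} from coercivity of the quadratic form $F$ together with continuity of the remaining terms in $\XCoef$. These are routine but should be flagged; once they are granted, the proof is essentially the two-line argument above. Note that the proposition claims only convergence of the energies and not of the iterates $(\XCoef^{k},\pi^{k})$ themselves, so no further compactness or uniqueness arguments are needed.
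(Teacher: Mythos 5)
Your argument is exactly the paper's: each update cannot increase the energy because the previous iterate remains feasible for the corresponding minimization, and boundedness from below then gives convergence of the monotone sequence. The extra remarks on non-negativity of the cost terms and existence of the argmins only make explicit what the paper's proof takes for granted, so the approach is essentially identical and correct.
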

		\begin{proof}
			Since $\XCoef^{k}$ is feasible when determining $\XCoef^{k+1}$, one has $\hat{E}(\XCoef^{k+1},\pi^{k}) \leq \hat{E}(\XCoef^{k},\pi^{k})$.
			Likewise $\pi^{k}$ is a feasible point for computing $\pi^{k+1}$ so $\hat{E}(\XCoef^{k+1},\pi^{k+1}) \leq \hat{E}(\XCoef^{k+1},\pi^{k})$. Hence, the sequence of energies is non-increasing.
			As $\hat{E}$ is bounded from below, the sequence of energies must converge.
		\end{proof}	
	
		Unfortunately this cannot be extended to modes with non-zero divergence, as changing $\XCoef_{\tn{s}}$ changes the feasible set $\segcoupl\big(Y,(1+\XCoef_{\tn{s}})^2 \cdot \mu\big)$ for $\pi$. Thus $\pi^{k}$ need not be feasible for the problem that determines $\pi^{k+1}$ and the sequence of energies created may be increasing.
		We will provide a workaround	 for this in the next section (Remark \ref{rem:OptimizationAlternatingAndScale}).
		
		The alternating scheme \eqref{eq:OptimizationAlternating} is fast and tends to converge after few iterations. But obviously it need not converge to a global optimum and the result depends on the initialization $\XCoef^1$.
		Therefore, similar to contour based segmentation functionals it must be applied with care.
		In practice application to `large' transformations, e.g.~translations and rotations, works only if a good initial guess is available (see Fig.~\ref{fig:6}).
		On the other hand it achieves decent results on smaller transformations, as most statistically learned deformations are.
				
	\subsection{Globally Optimal Branch and Bound}
		\label{sec:OptimizationBnB}
		For handling large displacement transformations, one needs a global optimization scheme.
		As discussed in Remark \ref{rem:WassersteinModesBnB}, for fixed $\XCoef$ we can eliminate $\nu$ by a separate convex optimization. One obtains \eqref{eq:WassersteinModesXCoefFunctional}:
		\begin{align}
			E_1(\XCoef) = \inf_{\nu \in \segmeas(Y,M)} & E(\XCoef,\nu) \nonumber \\
			= \inf_{\nu \in \segmeas(Y,M)} \frac{1}{2} & \inf_{\pi \in \Pi(\mu,\nu)}
				\int_{X \times Y} \left( \vphantom{\sum}
					\cGeo\big(\XEmb_\XCoef(x),y\big) + c_\feat(f_x,f_y)
				\right)\,d\pi(x,y) \nonumber \\
				& \qquad + F(\XCoef) + G(\nu)
				\label{eq:OptimizationBnBE1}
		\end{align}		
		This function is in general non-convex but low dimensional. We thus strive for a non-convex global optimization scheme.

		Given Remark \ref{rem:CouplingReformulation} $E_1(\XCoef)$ can be written as
		\begin{align}	
			\label{eq:OptimizationBnBE1Joint}
			E_1(\XCoef) = \inf_{\pi \in \segcoupl(Y,\mu)} \frac{1}{2}
				\int_{X \times Y} \left( \vphantom{\sum}
					\cGeo\big(\XEmb_\XCoef(x),y\big) + c_\feat(f_x,f_y)
				\right)\,d\pi(x,y) + F(\XCoef) + G({\project_Y}_\sharp \pi)\,.
		\end{align}
		If $G$ is zero, then by inserting suitable dummy nodes, computing $E_1(\XCoef)$ can be written as an optimal transport problem for which efficient solvers are available.

		In this section we will consider a hierarchical \emph{branch and bound} approach. We will compute lower bounds for $E_1$ on whole intervals of $\XCoef$-configurations for successively refined intervals.
		Let $\XCoefSet \subset \R^n$ be a set of $\XCoef$-values. We assume for now that all modes have zero divergence.
		For such subsets define
		\begin{align}
			E_2(\XCoefSet) = \inf_{\pi \in \segcoupl(Y,\mu)} & \frac{1}{2}
				\int_{X \times Y} \left( \vphantom{\sum}
					\left(
						\inf_{\XCoef \in \XCoefSet}
						\cGeo\big(\XEmb_\XCoef(x),y\big)
					\right) + c_\feat(f_x,f_y)
				\right)\,d\pi(x,y) \nonumber \\ 
				& \qquad + \inf_{\XCoef \in \XCoefSet} F(\XCoef) + G({\project_Y}_\sharp \pi)
				\label{eq:OptimizationBnBE2}
		\end{align}
		where we have again merged the nested optimizations as above. All occurrences of $\XCoef$ are optimized separately and independently over $\XCoefSet$.
		By introducing a nested sequence of feasible sets
		\begin{equation}
			\Lambda_{1} \supset \Lambda_{2} \supset \dotsb \supset \Lambda_n
		\end{equation}
		we obtain an adaptive convex relaxation of $E_1(\XCoef)$ over $\XCoefSet$. The relaxation becomes tighter as the set becomes smaller.
		For application in a branch and bound scheme the following properties are required:
		\begin{proposition}[\protect{\cite[Prop.~1]{SchmitzerSchnoerr-EMMCVPR2013}}]
			\thlabel{thm:E2}
			The functional $E_2$ has the following properties:
			\begin{enumerate}[(i)]
				\item $E_2(\XCoefSet) \leq E_1(\XCoef) \fa \XCoef \in \XCoefSet$,
					\label{item:BnBE2LowerBound}
				\item $\lim_{\XCoefSet \rightarrow \{\XCoef_0\}} E_2(\XCoefSet)= E_1(\XCoef_0)$,
					\label{item:BnBE2Limit}
				\item $\XCoefSet_1 \subset \XCoefSet_2 \Rightarrow E_2(\XCoefSet_1) \geq E_2(\XCoefSet_2)$.
					\label{item:BnBE2Tightness}
			\end{enumerate}
		\end{proposition}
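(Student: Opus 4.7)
The three properties follow from fairly straightforward monotonicity and continuity arguments applied inside the optimal transport formulation, so my plan is to handle (i) and (iii) first as one-line monotonicity observations, and then spend most of the effort on the limit statement (ii).

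For (i), fix any $\XCoef \in \XCoefSet$. Then pointwise in $(x,y)$ one has $\inf_{\XCoef' \in \XCoefSet} \cGeo(\XEmb_{\XCoef'}(x),y) \leq \cGeo(\XEmb_\XCoef(x),y)$, and similarly $\inf_{\XCoef' \in \XCoefSet} F(\XCoef') \leq F(\XCoef)$. Integrating against the nonnegative measure $\pi$ and taking infimum over the common feasible set $\segcoupl(Y,\mu)$ (which does not depend on $\XCoef$ since all modes have zero divergence) gives $E_2(\XCoefSet) \leq E_1(\XCoef)$. For (iii), if $\XCoefSet_1 \subset \XCoefSet_2$ then the pointwise infima in the integrand and the infimum of $F$ are both larger on $\XCoefSet_1$ than on $\XCoefSet_2$; under a common feasible set for $\pi$ this yields $E_2(\XCoefSet_1) \geq E_2(\XCoefSet_2)$ by monotonicity of the infimum in its integrand.

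The interesting part is (ii). The plan is to combine (i) with a matching upper bound. By (i), $\limsup_{\XCoefSet \to \{\XCoef_0\}} E_2(\XCoefSet) \leq E_1(\XCoef_0)$. For the reverse inequality, I would use a compactness/selection argument: for every sufficiently small $\XCoefSet$ pick an optimal (or near-optimal) coupling $\pi_{\XCoefSet} \in \segcoupl(Y,\mu)$ achieving $E_2(\XCoefSet)$. Since $\segcoupl(Y,\mu)$ is weakly compact (it is a bounded set of nonnegative measures on the compact product $X \times Y$ with fixed marginal $\mu$ and the other marginal bounded by $\mc{L}_Y$), we can extract a weakly convergent subsequence $\pi_{\XCoefSet_k} \rightharpoonup \pi^\ast \in \segcoupl(Y,\mu)$. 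Writing $c_{\XCoefSet}(x,y) = \tfrac12\big(\inf_{\XCoef \in \XCoefSet} \cGeo(\XEmb_\XCoef(x),y) + c_\feat(f_x,f_y)\big)$, continuity of $\XCoef \mapsto \XEmb_\XCoef(x)$ and of $\cGeo$ imply that $c_{\XCoefSet_k}$ converges monotonically upward to $c_{\{\XCoef_0\}}$ as $\XCoefSet_k \to \{\XCoef_0\}$; combined with weak convergence of $\pi_{\XCoefSet_k}$ and lower semicontinuity of $G$ against the projected marginal, this yields $\liminf E_2(\XCoefSet_k) \geq E_1(\XCoef_0)$, closing the gap.

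The main obstacle is the lower-semicontinuity step in (ii): one needs the integrand $c_{\XCoefSet}$ to behave well as $\XCoefSet$ shrinks, and $G$ to be lower semicontinuous with respect to the relevant topology on marginals. If $F$, $\cGeo \circ \XEmb_{(\cdot)}$ and $G$ are assumed continuous (which is the case in the discrete numerical setting of the paper, where $\XCoefSet$-infima are over compact intervals and everything is finite-dimensional), the argument simplifies substantially: all infima commute with the limit $\XCoefSet \to \{\XCoef_0\}$ by continuity, and one only needs to invoke compactness of $\segcoupl(Y,\mu)$ to pass to a limit coupling. In the discrete setting the whole argument reduces to a direct application of continuity of the objective in $(\XCoef,\pi)$ jointly and compactness of the feasible polytope, which is the regime the paper actually operates in.
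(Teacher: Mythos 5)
Your proofs of (i) and (iii) are exactly the paper's: pointwise domination of the infimized cost and of $F$, then passing the inequality through the common minimization over $\pi \in \segcoupl(Y,\mu)$. For (ii), however, you take a genuinely different and heavier route than the paper. The paper's argument never extracts a limit coupling: since $\cGeo\big(\XEmb_\XCoef(x),y\big)$ and $F(\XCoef)$ are continuous in $\XCoef$ (indeed $\XEmb_\XCoef(x)$ is affine in $\XCoef$), as $\XCoefSet \rightarrow \{\XCoef_0\}$ the inner minimizations converge to the values at $\XCoef_0$ — uniformly in $(x,y)$ on the relevant (bounded or discrete) domains — so the objectives of $E_2(\XCoefSet)$ and $E_1(\XCoef_0)$ differ by a uniformly small amount \emph{at every fixed feasible} $\pi$ (the $c_\feat$ and $G$ terms being identical and all feasible $\pi$ having the same mass $M$), and hence the infima converge. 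Your argument instead proves the limsup bound from (i) and the liminf bound by selecting near-optimal couplings, invoking weak compactness of $\segcoupl(Y,\mu)$, monotone upward convergence of the relaxed cost, and weak lower semicontinuity of $G$ and of the $c_\feat$ term. This is valid, but it buys nothing here and costs extra hypotheses the paper does not need: nestedness of the sets $\XCoefSet_k$ for the monotonicity, compactness/tightness to extract $\pi^\ast$, lower semicontinuity of $(x,y)\mapsto c_\feat(f_x,f_y)$ (the observed features $f_y$ need not be regular), and lower semicontinuity of $G$ under weak convergence of marginals — none of which enter the paper's comparison-at-fixed-$\pi$ argument, because the terms you must pass to the limit simply cancel there. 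Your closing remark that in the discrete setting everything reduces to continuity plus compactness of the feasible polytope is correct and is effectively the regime in which the paper's one-line continuity argument is already fully rigorous.
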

		\begin{proof}
			Property (\ref{item:BnBE2LowerBound}): For any $\XCoef \in \XCoefSet$ obviously
			\begin{align}
				\inf_{\XCoef' \in \XCoefSet} \cGeo\big(\XEmb_{\XCoef'}(x),y\big) \leq  \cGeo\big(\XEmb_{\XCoef}(x),y\big) \qquad
				\text{and}
				\qquad \inf_{\XCoef' \in \XCoefSet} F(\XCoef') \leq F(\XCoef)\,.
			\end{align}
			So for any fixed $\pi \in \segcoupl(Y,\mu)$ (overriding the minimization in (\ref{eq:OptimizationBnBE1Joint},\ref{eq:OptimizationBnBE2})) have
			$E_2(\XCoefSet) \leq E_1(\XCoef)$. Consequently this inequality will also hold after minimization w.r.t.~$\pi$.
			
			For the limit property (\ref{item:BnBE2Limit}) note that the functions $\cGeo\big(\XEmb_\XCoef(x),y\big)$ and $F(\XCoef)$ are continuous functions of $\XCoef$. Hence, when $\XCoefSet \rightarrow \{\XCoef_0\}$ all involved minimizations will converge towards the respective function values at $\XCoef_0$ and $E_2$ converges as desired.
			
			For the hierarchical bound property (\ref{item:BnBE2Tightness}) note that for fixed $\pi$ in \eqref{eq:OptimizationBnBE2} minimization over the larger set $\XCoefSet_2$ will never yield the larger result for all occurrences of $\XCoef$. This relation will then also hold after minimization.
		\end{proof}
		
		With the aid of $E_2$ one can then construct a branch and bound scheme for optimization of $E_1$.
		Let
		\begin{align}
			\label{eq:OptimizationCoverSet}
			L = \{ (\XCoefSet_i, b_i) \}_{i \in \{1,\ldots,k\}}
		\end{align}
		be a finite list of $\XCoef$-parameter sets $\XCoefSet_i$ and lower bounds $b_i$ on $E_1$ on these respective sets. For such a list consider the following refinement procedure:
		\vskip 0.25cm
		\noindent \textbf{\texttt{refine}(L):}
		\begin{enumerate}[(1)]
			\item Find the element $(\XCoefSet_{i^\ast},b_{i^\ast}) \in L$ with the smallest lower bound $b_{i^\ast}$.
			\item Let $\subdiv(\XCoefSet_{i^\ast})=\{\XCoefSet_{i^\ast,j}\}_j$ be a subdivision of the set $\XCoefSet_{i^\ast}$ into smaller sets.
			\item Compute $b_{i^\ast,j} = E_2(\XCoefSet_{i^\ast,j})$ for all $\XCoefSet_{i^\ast,j} \in \subdiv(\XCoefSet_{i^\ast})$.
			\item Remove $(\XCoefSet_{i^\ast},b_{i^\ast})$ from $L$ and add $\{(\XCoefSet_{i^\ast,j},b_{i^\ast,j})\}_{j}$ for $\XCoefSet_{i^\ast,j} \in \subdiv(\XCoefSet_{i^\ast})$.
		\end{enumerate}
		
		\noindent This allows the following statement:
		\begin{proposition}[\protect{\cite[Prop.~2]{SchmitzerSchnoerr-EMMCVPR2013}}]
			\thlabel{thm:refinement}
			Let $L$ be a list of finite length. Let the subdivision in \tn{\texttt{refine}} be such that any set will be split into a finite number of smaller sets, and that any two distinct points will eventually be separated by successive subdivision. Set $\subdiv(\{\XCoef_0\}) = \{ \{\XCoef_0\} \}$.
			Then repeated application of \tn{\texttt{refine}} to the list $L$ will generate an adaptive piecewise constant underestimator of $E_1$ throughout the union of the sets $\XCoefSet$ appearing in $L$.
			The sequence of smallest lower bounds will converge to the global minimum of $E_1$.
		\end{proposition}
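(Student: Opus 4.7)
My plan is to argue in three stages: invariance of the lower bound under \texttt{refine}, monotone convergence of the smallest bound, and identification of the limit with $\min E_1$. First I would check that after every invocation of \texttt{refine} the list $L$ still represents a valid adaptive piecewise-constant underestimator of $E_1$ on the union of its sets. When $(\XCoefSet_{i^\ast},b_{i^\ast})$ is replaced by $\{(\XCoefSet_{i^\ast,j},b_{i^\ast,j})\}_j$, the covered union is preserved since $\bigcup_j \XCoefSet_{i^\ast,j}=\XCoefSet_{i^\ast}$, and by \thref{thm:E2}(\ref{item:BnBE2LowerBound}) each new bound $b_{i^\ast,j}=E_2(\XCoefSet_{i^\ast,j})$ lower-bounds $E_1$ pointwise on $\XCoefSet_{i^\ast,j}$. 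By \thref{thm:E2}(\ref{item:BnBE2Tightness}) we also have $b_{i^\ast,j}\ge b_{i^\ast}$, so the piecewise-constant underestimator can only be tightened as the iteration proceeds.

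Next, let $\beta_k$ denote the smallest lower bound in the list after $k$ applications of \texttt{refine}. By the previous remark $(\beta_k)$ is non-decreasing: the removed entry carried bound $\beta_k$, its children carry bounds $\ge\beta_k$, and all other entries are untouched. Since $\beta_k\le E_1(\XCoef^\ast)$ for any global minimizer $\XCoef^\ast$ of $E_1$ inside the covered region, the monotone sequence converges to some $\beta_\infty\le\min E_1$.

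The heart of the proof is to exclude $\beta_\infty<\min E_1$; I would argue by contradiction via König's lemma. As \subdiv produces only finitely many children per set, the tree of all sets ever entered into $L$ is finitely branching. If refinement runs forever without stabilising on singletons, this tree is infinite and therefore contains an infinite descending chain $\XCoefSet^{(0)}\supset\XCoefSet^{(1)}\supset\dotsb$ in which $\XCoefSet^{(j+1)}$ is always a \subdiv-child of $\XCoefSet^{(j)}$. If $k_j$ is the step at which $\XCoefSet^{(j)}$ was selected for subdivision, then $E_2(\XCoefSet^{(j)})=\beta_{k_j}$ is a subsequence of $(\beta_k)$, so $E_2(\XCoefSet^{(j)})\to\beta_\infty$. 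The separation hypothesis on \subdiv forces $\bigcap_j\XCoefSet^{(j)}$ to contain at most one point $\XCoef_\infty$, and a standard diameter argument then yields $\diam(\XCoefSet^{(j)})\to 0$. Invoking \thref{thm:E2}(\ref{item:BnBE2Limit}) along the chain gives $\beta_{k_j}\to E_1(\XCoef_\infty)\ge\min E_1$, contradicting $\beta_\infty<\min E_1$. The degenerate case where only finitely many genuine subdivisions occur is simpler: the list eventually consists of singletons, on which $E_2$ equals $E_1$ exactly, so $\beta_k$ stabilizes at $\min E_1$.

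The main obstacle sits in the third paragraph. Two subtleties demand care: applying König's lemma so as to extract a single infinite descending chain (rather than merely concluding that the tree is infinite), and translating the combinatorial separation assumption on \subdiv into the topological statement $\diam(\XCoefSet^{(j)})\to 0$, which is what is actually needed to invoke \thref{thm:E2}(\ref{item:BnBE2Limit}) with its implicit continuity hypothesis on $\cGeo$ and $F$. A minor further check is that the sub-subsequence $(\beta_{k_j})$ inherits the same limit $\beta_\infty$ as $(\beta_k)$, which is automatic from the monotonicity of the latter.
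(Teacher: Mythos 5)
Your proposal is correct and follows essentially the same route as the paper's proof: monotonicity and boundedness of the smallest lower bounds via \thref{thm:E2}(\ref{item:BnBE2Tightness}) and (\ref{item:BnBE2LowerBound}), extraction of a nested chain of selected sets shrinking to a singleton, and identification of the limit with the minimum via \thref{thm:E2}(\ref{item:BnBE2Limit}). You merely make explicit (K\"onig's lemma, the compactness/diameter argument) what the paper dispatches with the remark that such a chain must exist since $L$ is finite.
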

		\begin{proof}
			Obviously the sequence of smallest lower bounds is non-decreasing and never greater than the minimum of $E_1$ throughout the considered region (see \thref{thm:E2} (\protect\ref{item:BnBE2Tightness}) and (\protect\ref{item:BnBE2LowerBound})). So it must converge to a value which is at most this minimum.
			Assume that $\{ \XCoefSet_i \}_i$ is a sequence with $\XCoefSet_{i+1} \in \subdiv(\XCoefSet_i)$ such that $E_2(\XCoefSet_i)$ is a subsequence of the smallest lowest bounds of $L$ (there must be such a sequence since $L$ is finite). Since $\subdiv$ will eventually separate any two distinct points, this sequence must converge to a singleton $\{\XCoef_0\}$ and the corresponding subsequence of smallest lowest bounds converges to $E_2(\{\XCoef_0\}) = E_1(\XCoef_0)$. Since the sequence of smallest lowest bounds converges, and the limit is at most the minimum of $E_1$, $E_1(\XCoef_0)$ must be the minimum.
		\end{proof}
		When the global optimum is unique, one can see that there also is a subsequence of $\XCoef$-sets, converging to the global optimum.

		In practice we start with a coarse grid of hypercubes covering the space of reasonable $\XCoef$-parameters (e.g.~translation throughout the image, rotation within bounds where the approximation is valid and the deformation-coefficients in ranges according to the statistical model) and the respective $E_2$-bounds. Any hypercube with the smallest bound will then be subdivided into equally sized smaller hypercubes, leading to an adaptive $2^n$-tree cover on the considered parameter range.
	
		The refinement is stopped, when the interval with the lowest bound has edge lengths that correspond to an uncertainty in $T_\XCoef(x)$ which is in the range of the discretization of $X$ and $Y$. Further refinement would only reveal structure determined by rasterization effects.

		\begin{remark}[Combining hierarchical and alternating optimization]
			\label{rem:OptimizationCombine}
			The optimum of $E_1$ w.r.t. modes that have large displacements (such as translation and rotation) tends to be rather distinct, i.e.~there is a small, steep basin around the optimal position. The hierarchical optimization scheme then works rather efficiently.
			
			On the other hand, modes that model smaller, local displacements (e.g.~those learned from training samples), often have broad, shallow basins around the optimal value. The branch and bound scheme can then take longer to converge.
			
			Therefore it suggests itself to combine the two optimization schemes: the hierarchical approach is used to determine a good initial guess for translation, rotation and a coarse estimate for the smaller modes. For this the alternating scheme is not applicable due to the non-convexity.
			But once the broad basin around the global optimum is located, the branch and bound scheme may become inefficient. Conversely, using the estimate of the hierarchical scheme as initialization, we can then expect that the alternating method will give reasonable results.
		\end{remark}

		\paragraph{Scale Mode.}
			In the presence of a scale mode one can define $E_{\textnormal{s},1}$ and $E_{\textnormal{s},2}$ equivalent to $E_1$ and $E_2$ with slight adaptations.
			\begin{align}
				E_{\textnormal{s},1}(\XCoef) = & \inf_{\nu \in \segmeas\big(Y,(1+\XCoef_{\textnormal{s}})^2 \cdot M\big)} E_{\textnormal{s}}(\XCoef,\nu) \nonumber \\
				= & \inf_{\pi \in \segcoupl\big(Y,(1+\XCoef_{\textnormal{s}})^2 \cdot \mu\big)}
					\frac{1}{2\,(1 + \XCoef_{\textnormal{s}})^2}
					\nonumber \\
					& \qquad \qquad \int_{X \times Y} \left( \vphantom{\sum}
						\cGeo\big(\XEmb_\XCoef(x),y\big) + c_\feat(f_x,f_y)
					\right)\,d\pi(x,y)
					+ F(\XCoef) + G({\project_Y}_\sharp \pi)
			\end{align}
			where in the second line we have merged the nested optimization over $\nu$ and $\pi$, see Remark \ref{rem:CouplingReformulation}.
			To obtain $E_{\textnormal{s},2}(\XCoefSet)$ all occurrences of $\XCoef$ will again be replaced by independent separate optimizations over $\XCoefSet$.
			To handle the dependency of the feasible set on $\XCoef_{\textnormal{s}}$ consider the following set:
			\begin{align}
				\segcoupl(Y,\mu_1,\mu_2) = \left\{ \pi \in \meas(X \times Y) \bcolon
					\mu_1 \leq {\project_X}_\sharp \pi \leq \mu_2 \wedge {\project_Y}_\sharp \pi \leq \mc{L}_Y \right\}
			\end{align}
			Obviously $\segcoupl\big(Y,(1+\XCoef_{\textnormal{s}})^2 \cdot \mu\big) \subset \segcoupl\big(Y,(1+\XCoef_{\textnormal{s,l}})^2 \cdot \mu, (1+\XCoef_{\textnormal{s,u}})^2 \cdot \mu\big)$ as long as $\XCoef_{\textnormal{s,l}} \leq \XCoef_{\textnormal{s}} \leq \XCoef_{\textnormal{s,u}}$.
			Then a possible definition of $E_{\textnormal{s},2}$ equivalent to \eqref{eq:OptimizationBnBE2} is
			\begin{align}
				E_{\textnormal{s},2\textnormal{a}}(\XCoefSet) = & \inf_{\pi \in \segcoupl\big(Y,(1+\XCoef_{\textnormal{s,l}})^2 \cdot \mu, (1+\XCoef_{\textnormal{s,u}})^2 \cdot \mu\big)}
				\left( \min_{\XCoef_{\textnormal{s}} \in [\XCoef_{\textnormal{s,l}},\XCoef_{\textnormal{s,u}}]} \frac{1}{2\,(1 + \XCoef_{\textnormal{s}})^2} \right) \nonumber \\
				& \int_{X \times Y} \left( \vphantom{\sum}
					\left(
						\inf_{\XCoef \in \XCoefSet}
						\cGeo\big(\XEmb_\XCoef(x),y\big)
					\right) + c_\feat(f_x,f_y)
				\right)\,d\pi(x,y) + 
					\inf_{\XCoef \in \XCoefSet} F(\XCoef) + G({\project_Y}_\sharp \pi)
			\end{align}
			where $\XCoef_{\textnormal{s,l}}$ and $\XCoef_{\textnormal{s,u}}$ are the infimum and supremum of $\XCoef_{\textnormal{s}}$ in $\XCoefSet$. It is easy to see that $E_{\textnormal{s},2\textnormal{a}}$ satisfies \thref{thm:E2} w.r.t.~$E_{\textnormal{s},1}$. The proof is analogous.
			
			If $G$ is zero the definition of $E_{\textnormal{s},2\textnormal{a}}$ can be improved upon. Consider the following lemma:
			\begin{lemma}
				\thlabel{thm:Superlinearity}
				For some cost function $c$ and $m>0$ let
				\begin{align}
					f(m) = \inf_{\pi \in \segcoupl(Y,m \cdot \mu)} \int_{X \times Y} c(x,y)\,d\pi(x,y)\,.
				\end{align}
				Then $f(m_2)/m_2 \geq f(m_1)/m_1$ for $m_2>m_1$.
			\end{lemma}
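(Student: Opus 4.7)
The plan is to exploit the linearity of both the integral and the push-forward operations, together with the crucial one-sided nature of the $Y$-marginal constraint in $\segcoupl(Y, m\cdot\mu)$, to build a feasible competitor for $f(m_1)$ out of any (near-)optimal coupling for $f(m_2)$ by simple rescaling.

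Concretely, given $\varepsilon > 0$ I would first pick an $\varepsilon$-optimal $\pi_2 \in \segcoupl(Y, m_2 \cdot \mu)$ for $f(m_2)$, that is $\int_{X\times Y} c\,d\pi_2 \leq f(m_2) + \varepsilon$. Working with $\varepsilon$-minimizers rather than minimizers side-steps any worries about attainment of the infimum. I would then define the candidate
\[
\pi_1 = \tfrac{m_1}{m_2}\,\pi_2\,.
\]

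Next I would verify $\pi_1 \in \segcoupl(Y, m_1 \cdot \mu)$. By linearity of the push-forward,
${\project_X}_\sharp \pi_1 = \tfrac{m_1}{m_2}\,{\project_X}_\sharp \pi_2 = \tfrac{m_1}{m_2}\cdot m_2\cdot \mu = m_1\cdot \mu$,
so the $X$-marginal constraint holds as equality. For the $Y$-marginal,
${\project_Y}_\sharp \pi_1 = \tfrac{m_1}{m_2}\,{\project_Y}_\sharp \pi_2 \leq \tfrac{m_1}{m_2}\,\mc{L}_Y \leq \mc{L}_Y$,
where the last step uses $m_1 < m_2$. This is the one substantive step, and it is exactly the place where the ordering hypothesis is needed: the one-sided inequality in the definition of $\segcoupl$ is preserved under multiplication by a factor in $(0,1)$, while multiplication by a factor greater than $1$ would destroy it.

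Finally, linearity of the integral gives
\[
f(m_1) \;\leq\; \int_{X\times Y} c\,d\pi_1 \;=\; \tfrac{m_1}{m_2}\int_{X\times Y} c\,d\pi_2 \;\leq\; \tfrac{m_1}{m_2}\,\bigl(f(m_2)+\varepsilon\bigr)\,.
\]
Dividing by $m_1$ and letting $\varepsilon \to 0$ yields $f(m_1)/m_1 \leq f(m_2)/m_2$, as claimed. I do not anticipate a real obstacle here: the argument is essentially one line once one observes that $\segcoupl$ shrinks under $\pi \mapsto \alpha\pi$ for $\alpha\in(0,1]$ modulo the $X$-marginal equality, which can always be restored by the same rescaling.
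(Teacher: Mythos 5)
Your proposal is correct and uses essentially the same argument as the paper: rescaling a (near-)optimal coupling for $f(m_2)$ by the factor $m_1/m_2$ to obtain a feasible competitor for $f(m_1)$, where $m_1<m_2$ ensures the one-sided constraint ${\project_Y}_\sharp \pi \leq \mc{L}_Y$ is preserved. The only difference is cosmetic: the paper argues by contradiction with an exact optimizer, while you argue directly with $\varepsilon$-minimizers, which sidesteps the (minor) question of attainment of the infimum.
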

			\begin{proof}
				Assume $f(m_2) < (m_2/m_1) \cdot f(m_1)$ for $m_2 > m_1$ and let $\pi^\ast_2$ be an optimizer for $f(m_2)$. Then $(m_1/m_2) \cdot \pi^\ast_2$ is feasible for computation of $f(m_1)$ and one has
				\begin{align}
					\frac{m_1}{m_2} \int_{X \times Y} c(x,y)\,d\pi^\ast_2(x,y) = \frac{m_1}{m_2} f(m_2) < f(m_1)
				\end{align}
				which is a contradiction.
			\end{proof}
			With the aid of \thref{thm:Superlinearity} one then finds that the following is a suitable variant of $E_{\textnormal{s},2\tn{a}}$:			
			\begin{align}
				E_{\textnormal{s},2\textnormal{b}}(\XCoefSet) = & \inf_{\pi \in \segcoupl\big(Y,(1+\XCoef_{\textnormal{s,l}})^2 \cdot \mu\big)}
				\frac{1}{2\,(1 + \XCoef_{\textnormal{s,l}})^2} \nonumber \\
				& \qquad \int_{X \times Y} \left( \vphantom{\sum}
					\left(
						\inf_{\XCoef \in \XCoefSet}
						\cGeo\big(\XEmb_\XCoef(x),y\big)
					\right) + c_\feat(f_x,f_y)
				\right)\,d\pi(x,y) + 
					\inf_{\XCoef \in \XCoefSet} F(\XCoef)
			\end{align}
			The advantages over $E_{\textnormal{s},2\textnormal{a}}$ are a tighter scaling factor and a simpler feasible set for the optimal transport term.
			\begin{remark}[Scale mode and alternating optimization]
				\label{rem:OptimizationAlternatingAndScale}
				The alternating optimization scheme presented in Sect.~\ref{sec:OptimizationAlternating} only works with zero-divergence modes. The hierarchical optimization scheme can be used to extend this to the scale mode.
				The non-scale coefficients are determined by separate optimization as before, see \eqref{eq:OptimizationAlternatingXCoef}.
				The new coefficient $\XCoef_{\textnormal{s}}^{k+1}$ and $\pi^{k+1}$ are jointly determined by global hierarchical optimization, while keeping the other mode coefficients fixed (this replaces \eqref{eq:OptimizationAlternatingNu}). This hierarchical scheme will only go over one degree of freedom and thus be very quick.
				Again one finds a non-increasing sequence that must eventually converge.
			\end{remark}

	\subsection{Graph Cut Relaxation}
		\label{sec:OptimizationGraphCut}
		Both alternating and hierarchical optimization require solving a lot of optimal transport problems. Even with efficient solvers this will quickly become computationally expensive as the size of $X$ and $Y$ or the number of modes increases.
		If $G$ is non-zero then usually even more so because dedicated optimal transport solvers can no longer be applied directly to compute $E_1(\XCoef)$.
		Therefore, in this section we present a mass-constraint relaxation that, for suitable choice of $G$, turns computation of $E_1(\XCoef)$ into a min-cut problem. This can be solved very fast with dedicated algorithms and therefore the relaxation yields a huge speed-up.
		
		Throughout this section let $X$ and $Y$ be discrete sets, e.g.~pixels or super-pixels. The Lebesgue measure on $Y$ is approximated by
		\begin{align}
			\mc{L}_Y(\sigma) = \sum_{y \in \sigma} m_y
		\end{align}
		for subsets $\sigma \subset Y$, where $m_y$ is the area of super-pixel $y$.
		Any $\nu \in \segmeas(Y,M)$ can then be expressed as
		\begin{align}
			\label{eq:GraphCutUNu}
			\nu(\sigma) = \sum_{y \in \sigma} m_y\,u_\nu(y)
		\end{align}
		for all $\sigma \subset Y$ with some function $u_\nu \,\colon\, Y \rightarrow [0,1]$.
		Let $G$ be a total-variation-like local boundary regularizer of $\nu$, expressed in terms of $u_\nu$:
		\begin{align}
			\label{eq:GraphCutG}
			G(\nu) = \sum_{(y,y') \in \adjY} a_{y,y'} \cdot |u_\nu(y)-u_\nu(y')|
		\end{align}
		where $\adjY$ is the set of super-pixel neighbours and $a_{y,y'}$ is a weight that models the likelihood of a boundary between neighbours $y$ and $y'$. Such weights can be constructed from feature dissimilarity in $y,y'$, from the response of edge detectors and from the length of the boundary.

		We now relax the template-marginal constraint from the coupling set $\Pi(\mu,\nu)$ and allow $\nu$ to have arbitrary mass. So the feasible set of $\nu$ will be
		\begin{align}
			\segmeas(Y) = \left\{ \vphantom{\sum}
				\nu \in \meas(Y) \bcolon
				\nu \leq \mc{L}_Y
				\right\}\,.
		\end{align}
		This is \eqref{eq:SegmentationMeasure} without the mass constraint. The `couplings' $\pi$ will be taken from the set
		\begin{align}
			\hat{\Pi}(\nu) = \left\{ \vphantom{\sum}
				\pi \in \meas(X \times Y) \bcolon
				{\project_Y}_\sharp \pi = \nu
			\right\}\,.
		\end{align}
		Merging optimizations (see Remark \ref{rem:CouplingReformulation}) yields the feasible set
		\begin{align}
			\segcoupl(Y) = \left\{ \vphantom{\sum}
				\pi \in \meas(X \times Y) \bcolon
				{\project_Y}_\sharp \pi \leq \mc{L}_Y
			\right\}\,.			
		\end{align}
		The relaxed equivalent of $E_1$ \eqref{eq:OptimizationBnBE1Joint} that we consider in this section is
		\begin{align}
			\label{eq:GraphCutE1}
			E_{\textnormal{r},1}(\XCoef) =
			\inf_{\pi \in \segcoupl(Y)} \frac{1}{2}
				\int_{X \times Y} \left( \vphantom{\sum}
					\cGeo\big(\XEmb_\XCoef(x),y\big) + c_\feat(f_x,f_y)
				\right)\,d\pi(x,y) + F(\XCoef) + G({\project_Y}_\sharp \pi)\,.
		\end{align}
		Let $\pi^\ast$ be an optimizer of $E_{\textnormal{r},1}(\XCoef)$ for some configuration $\XCoef$. If $({\project_Y}_\sharp \pi^\ast)(y) > 0$ for some $y \in Y$, this mass will come from the cheapest $x \in X$ for this $y$, since there is no longer any constraint on the mass on $X$.
		The linear matching in the first term simplifies to a nearest neighbour matching for each $y \in Y$. This implies that the minimization in \eqref{eq:GraphCutE1} over $\pi \in \segcoupl(Y)$ can be simplified to a minimization over $\nu \in \segmeas(Y)$.
		Therefore \eqref{eq:GraphCutE1} is equivalent to
		\begin{align}
			\label{eq:GraphCutE1Nu}
			E_{\textnormal{r},1}(\XCoef) & =
			\inf_{\nu \in \segmeas(Y)} \frac{1}{2}
				\sum_{y \in Y} \cMin(y,\XCoef)\,\nu(y) + F(\XCoef) + G(\nu) \\
			\intertext{with}
			\label{eq:GraphCutCMin}
			\cMin(y,\XCoef) & = \min_{x \in X} \left( \cGeo\big(\XEmb_\XCoef(x),y\big) + c_\feat(f_x,f_y) \right)\,.
		\end{align}
		We express now $\nu$ in terms of $u_\nu$, see \eqref{eq:GraphCutUNu}, and plug in the form of the regularizer $G$ \eqref{eq:GraphCutG}. This yields
		\begin{align}
			\label{eq:GraphCutE1U}
			E_{\textnormal{r},1}(\XCoef) & =
			\inf_{u : Y \rightarrow [0,1]} \frac{1}{2}
				\sum_{y \in Y} \cMin(y,\XCoef)\cdot m_y \cdot u(y) + F(\XCoef) + \sum_{(y,y') \in \adjY} a_{y,y'} \cdot |u(y)-u(y')|\,.
		\end{align}
		For fixed $\XCoef$ this is a convex formulation of the max-flow\,/\,min-cut problem with nodes $Y$ and edges $\adjY$. The edge-weight between $y \in Y$ and the sink is given by $\cMin(y,\XCoef) \cdot m_y$ and the weights of the edges between $y,y' \in Y$ by $a_{y,y'}$. This problem can be solved very efficiently by dedicated algorithms, see for example \cite{KolmogorovGraphCutExperiments}.

		\begin{remark}[Optimization of $E_{\textnormal{r},1}$]
			Both the alternating method and the hierarchical scheme, Sects.~\ref{sec:OptimizationAlternating} and \ref{sec:OptimizationBnB}, can be applied directly to the optimization of $E_{\textnormal{r},1}$.
			The sequence equivalent to \eqref{eq:OptimizationAlternating} will provide a non-increasing converging sequence of energies. Since the dependence of the feasible set on the mass of $\mu$ has disappeared, it can also be extended to the scale mode.
			Also, handling the scale mode in the hierarchical scheme is simplified.
		\end{remark}	
						
		Functional \eqref{eq:GraphCutE1U} can be interpreted as a binary Markov random field (MRF) with labels fore- and background ($u \in \{1,0\}$) and a latent object configuration variable $\XCoef$.
		Such enhanced MRFs have been used in \cite{Kumar05} with the latent variables describing layered pictorial structures and in \cite{YangelShapeGraphMRF} with graph-based shape models.
		Optimization of a general class of such models via branch and bound has been discussed in \cite{LempitskyBranchAndMincut2008}.
		A main difference of the approach presented here and \cite{LempitskyBranchAndMincut2008} is that the shape variations are not captured implicitly in the hierarchical cluster of sample shapes but explicitly and smoothly in the set of learned Wasserstein modes.

\section{Numerical Examples}
	\label{sec:Experiments}
	We will now present some numerical examples for joint image segmentation and shape matching with Wasserstein modes. The scope of these examples is to transparently show the key properties of the functional (geometric invariance, response to noisy data etc.) and to demonstrate its applicability to different types of geometric data and features.

	\subsection{Setup and Implementation Details}
	\paragraph{Setting up the Model.} As discussed in Sect.~\ref{sec:GeometricInvariance} the functional component $F$, modelling the distribution of the deformation parameter $\XCoef$ (c.f.~\eqref{eq:WassersteinModesFunctional}), was not depending on the $\XCoef$-entries that describe translation, rotation and scale. For the statistical modes we modelled a simple Gaussian as given by \eqref{eq:FDefinition}. The weight $\gamma$ was set to a small value, i.e.~we `trusted' the data for small deformations and mainly wanted to keep the deformations from becoming too large, where the linear deformation model does no longer work very well.
	
	The number of used modes ranged between 3 and 8 for branch and bound, up to about 14 for the alternating scheme. As discussed in Sect.~\ref{sec:OptimizationBnB}, for the initial covering $L$ of the parameter space for $\XCoef$, we used a grid of $n$-dimensional hypercubes: for the translation components ranging over the area of the image, for rotation and scale within the limits where the numerical approximation is valid and for the statistical modes depending on the observed standard deviations during learning.
	
	A very important parameter in the functional is the relative weight between the geometric and the appearance cost function, $\cGeo$ and $c_\feat$. When the appearance features are very noisy, we tend to put more trust on the geometric component and thus the predefined deformation modes. For very reliable data we may accept a previously unknown deformation to better match the observed features. Some intuition on how to choose this relative weight may be gained from Fig.~\ref{fig:5}.

	\paragraph{Optimization Algorithms.} In most experiments numerical optimization was carried out in two steps, starting with branch and bound over the modes with largest deformations, followed by alternating optimization over all modes (see Remark \ref{rem:OptimizationCombine}). As pointed out in Remark \ref{rem:CFeatAndMetric} continuous solvers cannot be applied since the marginal $\nu$ is unknown. Therefore we rely on discrete algorithms. For numerical optimization of $E_2(\XCoefSet)$ we implemented two different methods:
	\begin{itemize}
		\item For $G = 0$, i.e.~in the absence of an additional segmentation term on the marginal $\nu$ (c.f.~\eqref{eq:WassersteinModesFunctional}), the functional $E_2(\XCoefSet)$ \eqref{eq:OptimizationBnBE2} can be evaluated by using a dedicated optimal transport solver. For this we wrote a $\texttt{c++}$ implementation of the Hungarian method \cite{KuhnHungarianMethod}.
		\item When $G$ is a discrete total-variation-like local regularity prior (see Sect.~\ref{sec:OptimizationGraphCut}, eq. \eqref{eq:GraphCutG}) evaluation of $E_2(\XCoefSet)$ can be written as a linear program, which we solved with \texttt{CPLEX}.
	\end{itemize}
	The top level, i.e.~everything except for the calls to optimize $E_2(\XCoefSet)$ was implemented in Mathematica.

	In practice we used the first variant for the branch and bound stage and the second variant for the subsequent alternating optimization stage. The reasoning behind this is that the total variation of a segmentation depends mostly on its local properties and can be vary significantly without altering its global configuration, which is what we look for during the branch and bound optimization. TV is then added during the `fine-tuning' in the alternating stage.

	\paragraph{Reducing Complexity in Practice.} To reduce computational complexity, we sampled the cost function $\cGeo(x,y) + c_\feat(f_x,f_y)$ for fixed $x$ only at positions $y$ close to $x$. When $y$ is very far from $x$ the high geometric cost will make the assignment very unlikely. The cut-off radius around $x$ is chosen according to the range of $c_\feat$ and the size of the mode parameter set $\XCoefSet$ during branch and bound. Global optimality of the sub-sampled cost-function w.r.t.~the dense model can be checked by introducing `overflow' variables with suitable assignment costs for each $x$: as long as no mass is put onto these overflow variables, the optimizer of the reduced model is also globally optimal in the dense model.
	
	\paragraph{Computational Complexity and Runtime.} Although we only used experimental code, which was far from being optimized for performance we briefly comment on the observed running-times to give the reader a general idea of the applicability. Experiments were performed on a standard desktop computer with an Intel Core i7 processor at 3.4 GHz and 16 GB RAM. The branch and bound scheme, which is the computationally most demanding part, was parallelized over the processor cores. The alternating optimization is much less demanding and consequently converges much faster.
	The discrete templates had several 100 points, the discrete images, super-pixel segmentations, etc. several 1000 points.
	
	For the branch and bound scheme the running time is determined by how many of the tree of bounds at different scales have to be explored until a minimizer is found.
	This number is sensitive to several factors: it grows exponentially with the number of degrees of freedom. Also, it depends on the specific problem instance and how well the global optimum is pronounced. In the presence of strong noise or multiple similarly good minima the scheme will naturally take longer as in a problem with only one distinct solution. Consequently it is not really possible to accurately estimate the number of required bounds beforehand, i.e.~to give an overall expected complexity estimate of the branch and bound scheme.
	
	During our experiments we observed running times from under a minute for 3-4 modes on `easy problems' up to about a day for 7-8 modes on very noisy and large instances.
	Instances shown in this section were mostly set up such that branch and bound would take 10 minutes at most.
	
	Of course the running time also depends strongly on the problem dimensions. Fortunately, the flexible mathematical framework provides means for reducing the problem dimensions easily by working for example on an over-segmentation with super-pixels instead of on the full pixel grid.
	The loss of resolution can often be compensated for by adding a local regularizer.

	\subsection{Numerical Results}
	We start with some synthetic experiments to transparently illustrate different properties of the functional. For these experiments the feature cost function $c_\feat(f_x,f_y)$ was chosen to be constant w.r.t.~$x$, i.e.~every template point expects the same features and the template has a homogeneous appearance. This corresponds to a classifier that tries to locally asses for each pixel whether it is part of the fore- or background.
	
	\paragraph{Branch and Bound.}
	A shape model of a bunny is learned from several different views. The subsequent task is then to find a novel view (within the range of the training views) among a collection of different shapes.
	Branch and bound was used to optimize over translations, rotation and scale of the object. On these degrees of freedom the alternating scheme is prone to getting stuck in a poor local minimum, if initialized on the wrong shape. Afterwards the alternating scheme was applied to account for non-isometric variations due to perspective. Additionally it is shown how the rotation invariance can be extended to large angles.
	The results of this experiment are illustrated in Fig.~\ref{fig:1-bunny}.
	
	\begin{figure}
		\centering
		\subfloat[]{%
			\label{fig:1-1} 
			\includegraphics[width=3cm]{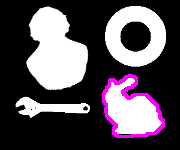}
		}%
		\subfloat[]{%
			\label{fig:1-2} 
			\includegraphics[width=3cm]{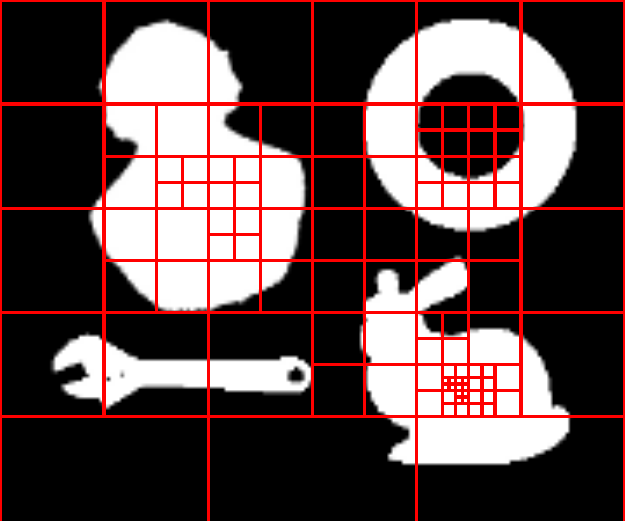}
		}%
		\subfloat[]{%
			\label{fig:1-3} 
			\includegraphics[width=3cm]{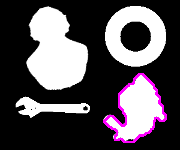}
		}%
		\hskip 0.5cm
		\subfloat[]{%
			\label{fig:1-4} 
			\resizebox{!}{2.5cm}{
			\begin{tikzpicture}
					[
					footpoint/.style={anchor=center, minimum width=0.2cm, shape=circle, fill=red, inner sep=0pt}
					]
				\draw (0,0) circle (3cm);
				\begin{scope}[x=3cm,y=3cm]
					\coordinate (f1) at (1,0);
					\coordinate (f2) at (0.707107,0.707107) [] {};
					\coordinate (f3) at (0,1) [] {};
					\coordinate (f4) at (-0.707107,0.707107) [] {};
					\coordinate (f5) at (-1,0) [] {};
					\coordinate (f6) at (-0.707107,-0.707107) [] {};
					\coordinate (f7) at (0,-1) [] {};
					\coordinate (f8) at (0.707107,-0.707107) [] {};
				\end{scope}
				\begin{scope}[x=2cm,y=2cm]
					\coordinate (v1) at (1,0);
					\coordinate (v2) at (0.707107,0.707107);
					\coordinate (v3) at (0,1);
					\coordinate (v4) at (-0.707107,0.707107);
					\draw ($(f1)-(v3)$) -- ($(f1)+(v3)$);
					\draw ($(f2)-(v4)$) -- ($(f2)+(v4)$);
					\draw ($(f3)-(v1)$) -- ($(f3)+(v1)$);
					\draw ($(f4)-(v2)$) -- ($(f4)+(v2)$);
					\draw ($(f5)-(v3)$) -- ($(f5)+(v3)$);
					\draw ($(f6)-(v4)$) -- ($(f6)+(v4)$);
					\draw ($(f7)-(v1)$) -- ($(f7)+(v1)$);
					\draw ($(f8)-(v2)$) -- ($(f8)+(v2)$);
				\end{scope}
				\begin{scope}
					\node[footpoint] at (f1) [] {};
					\node[footpoint] at (f2) [] {};
					\node[footpoint] at (f3) [] {};
					\node[footpoint] at (f4) [] {};
					\node[footpoint] at (f5) [] {};
					\node[footpoint] at (f6) [] {};
					\node[footpoint] at (f7) [] {};
					\node[footpoint] at (f8) [] {};
				\end{scope}
			\end{tikzpicture}
			}
		}%
		\caption{\textbf{Shape location with branch and bound.} We are searching for a bunny among a collection of other shapes via branch and bound. %
		\protect\subref{fig:1-1} Illustration of $c_\feat(f_x,f_y)$, white indicating foreground affinity. The optimal segmentation is given by the purple line. %
		\protect\subref{fig:1-2} The covering set $L$ \protect\eqref{eq:OptimizationCoverSet} upon convergence of branch and bound, projected onto the two translation components, shown relative to the query image. Very dissimilar shapes such as the wrench can be ruled out at a coarse level while more similar shapes such as the bust can only be discarded on finer scales. The grid is finest at the true location of the bunny. %
		\protect\subref{fig:1-3} Modified problem with the bunny rotated by a large angle. Such large angles cannot be covered by the rotation mode, see \protect\eqref{eq:RotationMode} and its discussion. Instead, one can use multiple support points on the shape manifold (sketched in \protect\subref{fig:1-4}), each equipped with a local rotation mode, and integrate them into the branch and bound scheme.%
		}
		\label{fig:1-bunny}
	\end{figure}

	\paragraph{Background Modelling.} Note that in order to locate the bunny correctly, we sometimes also need to model the image background in some way. This can be done implicitly by ensuring that the boundary of the foreground is aligned with detected contours in the image via a weighted TV-like term through $G(\nu)$ in \eqref{eq:WassersteinModesFunctional}. A more explicit approach is to extend the template to include a small region `looking like background' around the foreground (see Sect.~\ref{sec:BackgroundModelling}). This is demonstrated in Fig.~\ref{fig:2}.
	
	\begin{figure}
		\centering
		\subfloat[]{%
			\label{fig:2-1}
			\includegraphics[width=3.6cm]{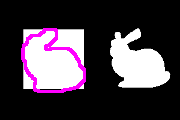}
		}%
		\hskip 0.25cm
		\subfloat[]{%
			\label{fig:2-4}
			\includegraphics[height=2.4cm]{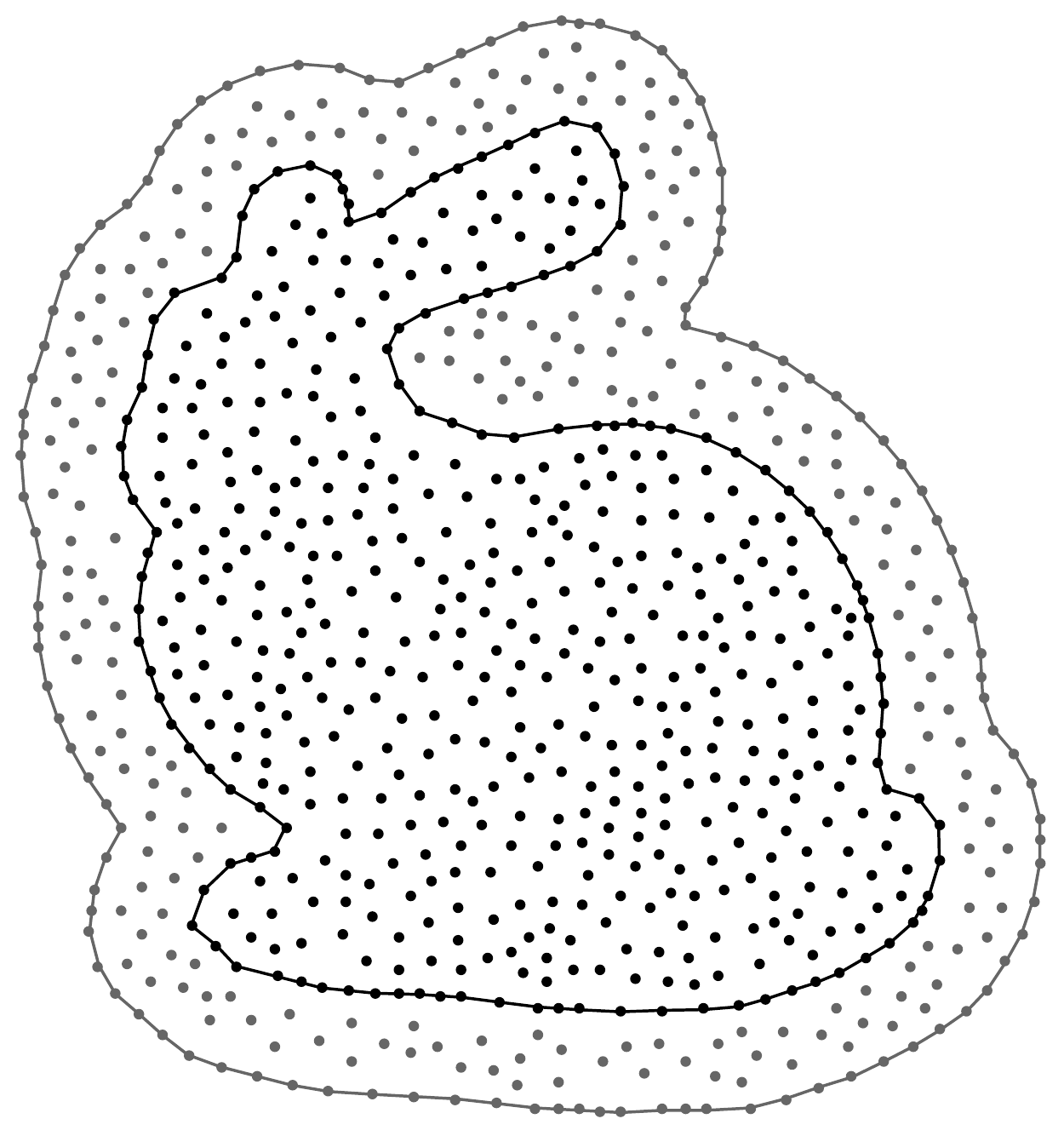}
		}%
		\hskip 0.25cm
		\subfloat[]{%
			\label{fig:2-2}
			\includegraphics[width=3.6cm]{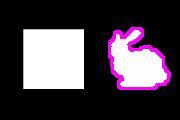}
		}%
		\subfloat[]{%
			\label{fig:2-3}
			\includegraphics[width=3.6cm]{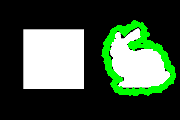}
		}%
		\caption{\textbf{Background modelling.} %
			\protect\subref{fig:2-1} Na\"ive segmentation without modelling the image background: sometimes it is then the optimal configuration to `immerse' the sought-after shape into a large blob of false-positive detections. %
			\protect\subref{fig:2-4} The shape template: to solve this problem, we can model a small area of background (gray) around the boundary of the object (black). %
			\protect\subref{fig:2-2} Optimal segmentation when the background around the object is modelled. 
			\protect\subref{fig:2-3} Region which is assigned to the explicitly modelled background.
		}
		\label{fig:2}
	\end{figure}

	More examples on detecting objects in a noisy environment and on restoring shapes from distorted detections are given in Figs.~\ref{fig:3} and \ref{fig:4}.

	\begin{figure}
		\centering
		\includegraphics[width=3.6cm]{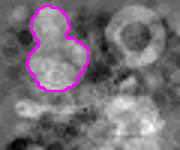}
		\hskip 1cm
		\includegraphics[width=3.6cm]{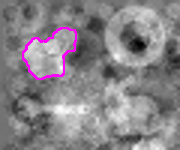}
		\hskip 1cm
		\includegraphics[width=3.6cm]{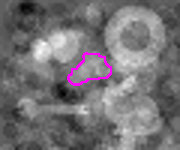}
		\caption{\textbf{Locating shapes in a noisy environment.} We are looking for the bust of Beethoven in a picture with non-local noise and other shapes present. In the first two examples the shape is correctly identified. In the third example, the true bust is missed, because it is rather small and instead a chunk of false-positive noise is segmented.}
		\label{fig:3}
	\end{figure}
	
	\begin{figure}
		\centering
		\includegraphics[width=3.cm]{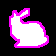}
		\hskip 1cm
		\includegraphics[width=3.cm]{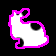}
		\hskip 1cm
		\includegraphics[width=3.cm]{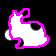}
		\caption{\textbf{Restoring distorted shapes.} By aid of the template geometry non-local noise, e.g.~partial occlusion and false-positive detections can be recognized and the segmentation retains the true sought-after shape.}
		\label{fig:4}
	\end{figure}
	
	\paragraph{Interaction of Regularizers.}
	Now let us study the interaction between the different components of the functional. Let $G$ be the discrete total variation of $\nu$ \eqref{eq:GraphCutG}. For now we ignore deformations and simply take a fixed template. That is we consider the following functional:	
	\begin{align}
		E(\nu) = \inf_{\pi \in \Pi(\mu,\nu)} \int_{X \times Y} \left(
			\|x-y\|^2 + \tau \cdot c_\feat(f_x,f_y) \right) d\pi(x,y) + \sigma \cdot G(\nu)
	\end{align}
	where we have introduced weights $\tau$ and $\sigma$. In Fig.~\ref{fig:5} it is illustrated how the optimal segmentations depend on $\tau$ and $\sigma$ in the presence of different types of noise.

	\begin{figure}
		\centering
		\begin{tabular}{ccccc}
			\includegraphics[width=2.7cm]{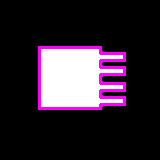} & %
			\includegraphics[width=2.7cm]{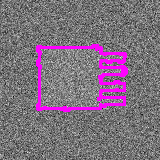} & %
			\includegraphics[width=2.7cm]{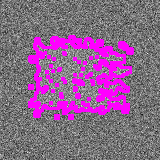} & %
			\includegraphics[width=2.7cm]{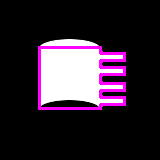} & %
			\includegraphics[width=2.7cm]{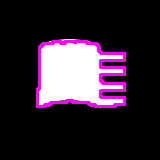} \\ %
			\includegraphics[width=2.7cm]{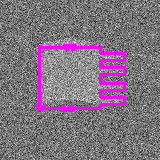} & %
			\includegraphics[width=2.7cm]{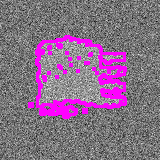} & %
			\includegraphics[width=2.7cm]{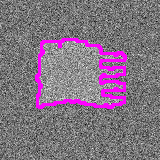} & %
			\includegraphics[width=2.7cm]{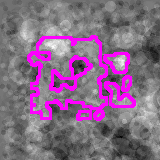} & %
			\includegraphics[width=2.7cm]{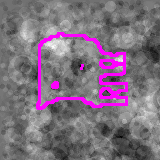} \\ %
		\end{tabular}
		\caption{\textbf{Interaction of Regularizers.} %
		\textit{Top row, from left to right:} %
		\textit{(1)} Clean problem with object formed exactly like template. %
		\textit{(2)} Local Gaussian noise, low feature-cost weight $\tau$, $\sigma=0$, i.e.~the optimal matching is dominated by the geometric cost component. %
		\textit{(3)} Same problem as before, but with a high $\tau$: now the local noise severely affects the segmentation, which becomes very irregular. %
		\textit{(4)} An unknown deformation is encountered (not described by a known deformation mode). With low $\tau$ it is ignored. %
		\textit{(5)} With a higher $\tau$ the optimal segmentation locally adapts to the unknown deformation. %
		\textit{Bottom row:} %
		\textit{(1)} Unknown deformation with local noise and low $\tau$: now the trick to simply increase $\tau$ \textit{(2)} to adapt for the unknown deformation does no longer work, as the local noise is distorting the segmentation. %
		\textit{(3)} The problem can be solved by adding a local boundary regularizer ($\sigma>0$): it helps to distinguish between the local Gaussian noise and the non-local unknown deformation. So the optimal segmentation ignores the former but adapts to the latter. %
		\textit{(4)} The same trick does not work with non-local noise: now unknown deformation and non-local noise cannot be separated and the optimal segmentation becomes faulty. %
		\textit{(5)} Adding the deformation as a Wasserstein mode helps to approximately find the object even in this noisy scenario, also thanks to the robustness of the globally optimal branch and bound scheme.
		}
		\label{fig:5}
	\end{figure}
	
	\paragraph{Alternating Optimization.}
	In Fig.~\ref{fig:6} the behaviour of the alternating optimization scheme is elucidated. In particular it becomes apparent how in noisy problems the scheme easily gets stuck in poor local minima. This is a general problem of local optimization methods and proves the importance of the globally optimal branch and bound scheme to provide a proper initial starting point.
	
	\begin{figure}
		\centering
		\subfloat[]{ \label{fig:6-1} \includegraphics[width=4cm]{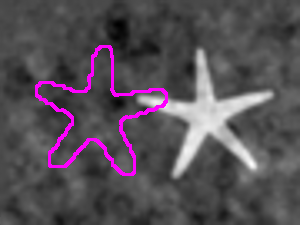}}
		\subfloat[]{ \label{fig:6-2} \includegraphics[width=4cm]{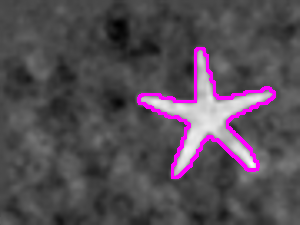}}
		\subfloat[]{ \label{fig:6-3} \includegraphics[width=4cm]{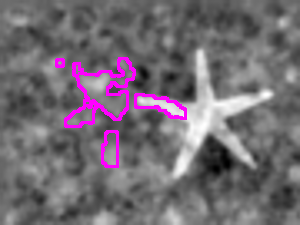}} \newline
		\subfloat[]{ \label{fig:6-4} \includegraphics[width=4cm]{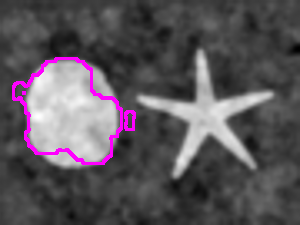}}
		\subfloat[]{ \label{fig:6-5} \includegraphics[width=4cm]{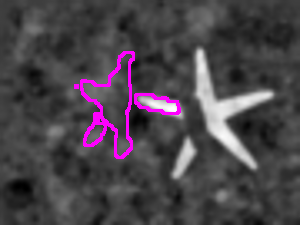}}
		\caption{\textbf{Alternating Optimization.} The fundamental limitations of the local optimization scheme become apparent in this experiment. %
		\protect\subref{fig:6-1} Initial position, some overlap with true segmentation is given. %
		\protect\subref{fig:6-2} Upon convergence the true shape has been located. %
		\protect\subref{fig:6-3} Same scenario but with a higher noise: now the alternating scheme gets stuck along the way. %
		\protect\subref{fig:6-4} A large, but not starfish-shaped blob on the left by mistake attracts the template. %
		\protect\subref{fig:6-5} The partial occlusion of the shape obstructs the convergence. The local scheme has no way of knowing `that the starfish continues' beyond the occlusion.}
		\label{fig:6}
	\end{figure}
	
	\paragraph{Super-pixels.} An important feature of functional \eqref{eq:WassersteinModesFunctional} is that its discrete version readily encompasses a wide range of data structures. As the computational complexity strongly depends on the size of the discretizations of $X$ and $Y$ it may be reasonable to apply the functional not directly to the pixel level but to a coarser over-segmentation as for example provided by super-pixels. Some examples with the class `starfish' are given in Fig. \ref{fig:7}.
	Fig.~\ref{fig:8} shows some of the involved non-isometric deformations to illustrate the range of the linear modes model and also one example where the limit of the linear expansion has been reached.

	\begin{figure}
		\centering
		\begin{tabular}{ccc}
		\includegraphics[height=3.2cm]{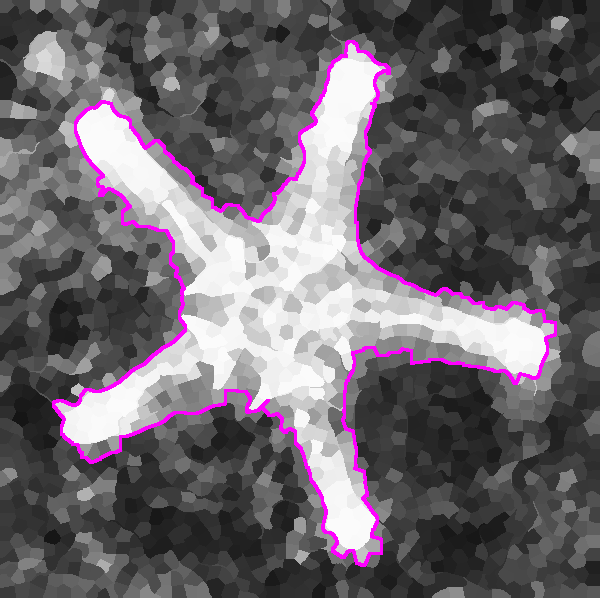} &
		\includegraphics[height=3.2cm]{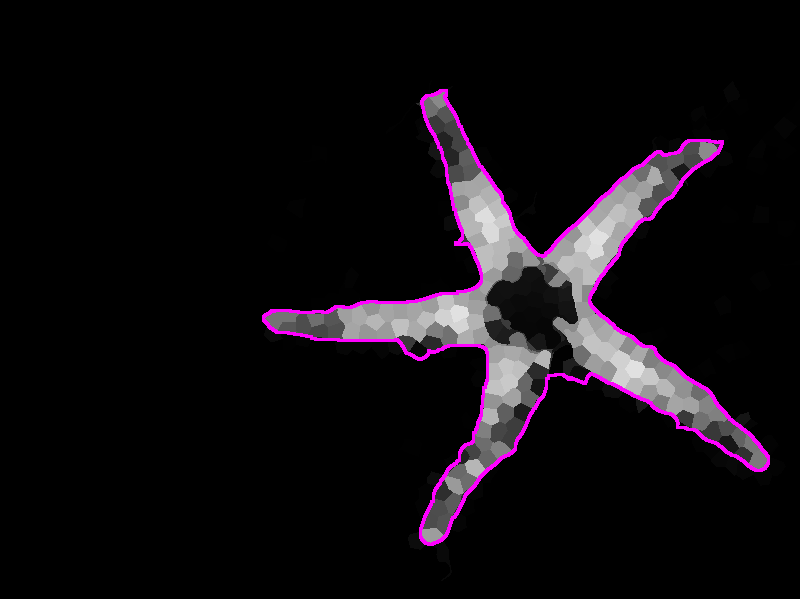} &
		\includegraphics[height=3.2cm]{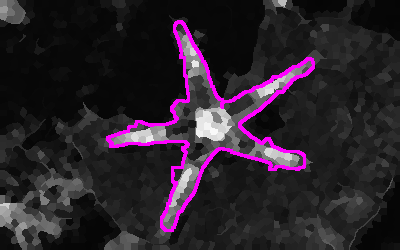} \\
		\includegraphics[height=3.2cm]{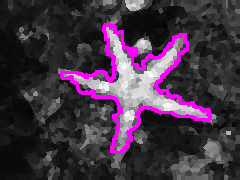} &
		\includegraphics[height=3.2cm]{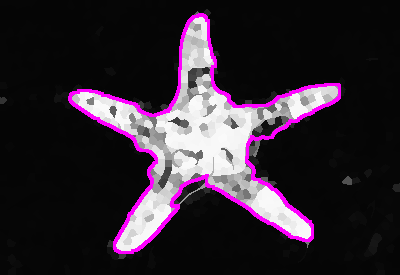} &
		\includegraphics[height=3.2cm]{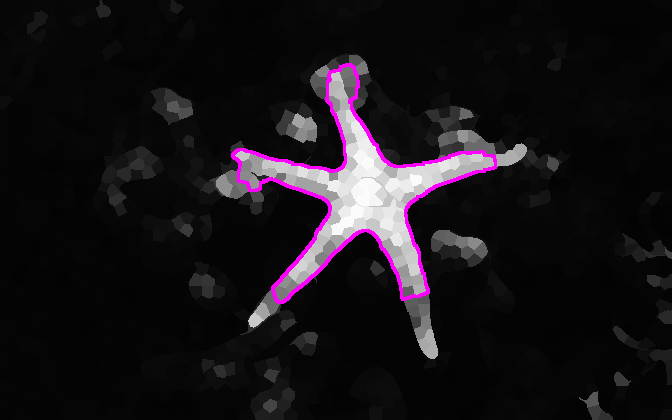} \\
		\end{tabular}
		\caption{\textbf{Application to super-pixel images.} The numerical framework extends seamlessly to super-pixel images. A simple local classifier based on color was applied to super-pixel images of starfish. The classifier was intentionally designed to yield partially faulty results. %
		With simultaneous matching and segmentation, locally faulty detections can be corrected for: false-positive clutter is ignored, missing parts are restored. On the bottom-right an example is given where the deformation modes are not flexible enough to adapt to the true object shape. }
		\label{fig:7}
	\end{figure}

	\begin{figure}
		\centering
		\begin{tabular}{ccc}
		\includegraphics[height=3.2cm]{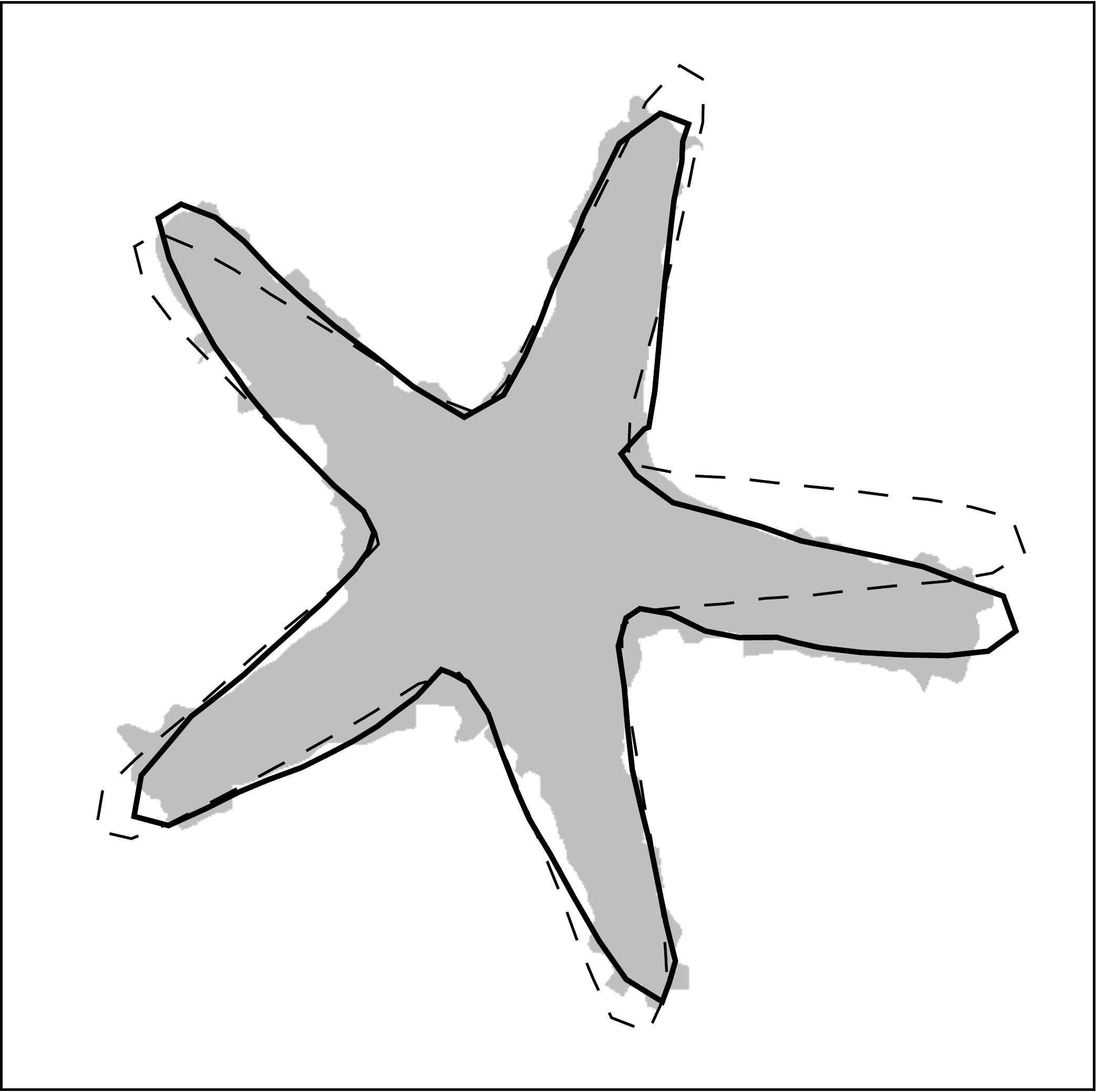} &
		\includegraphics[height=3.2cm]{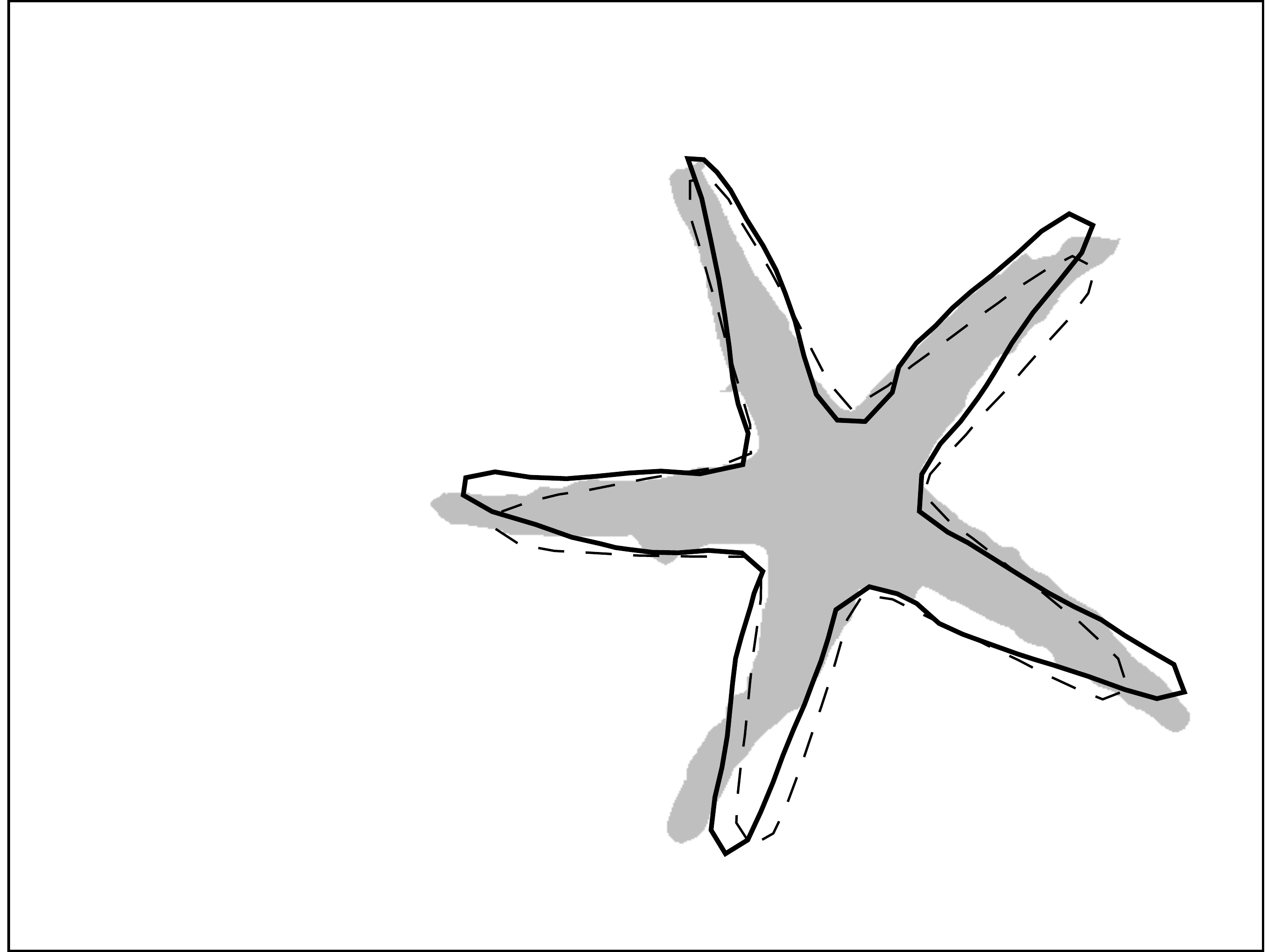} &
		\includegraphics[height=3.2cm]{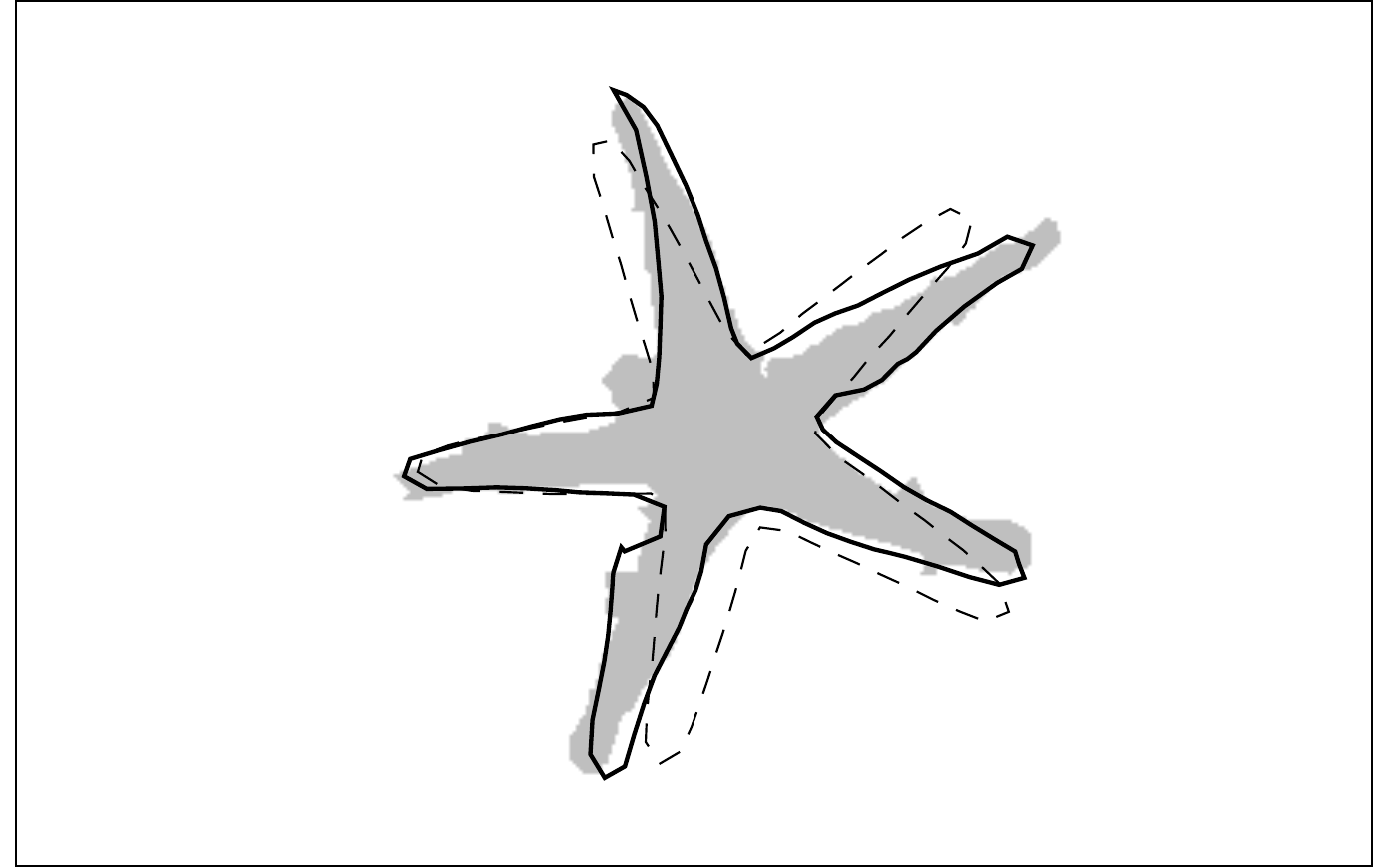} \\
		\includegraphics[height=3.2cm]{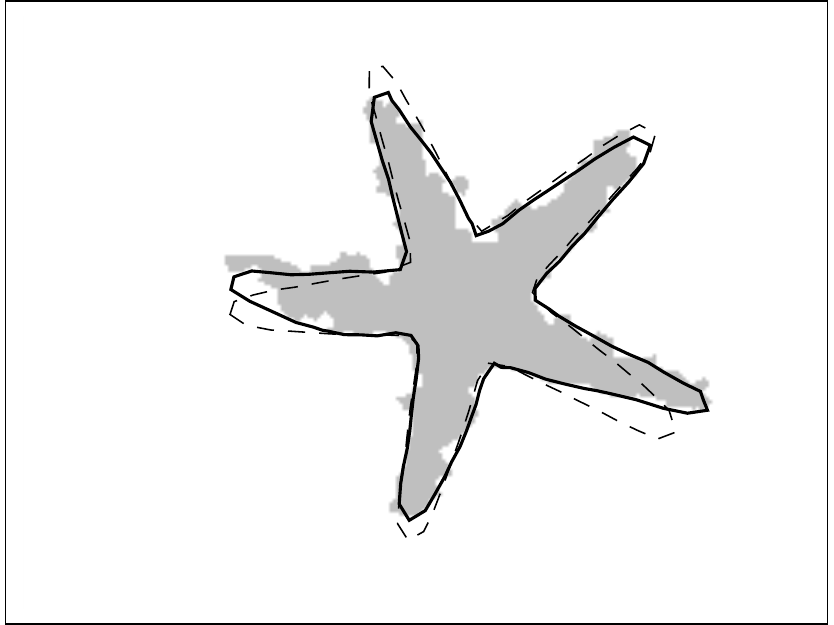} &
		\includegraphics[height=3.2cm]{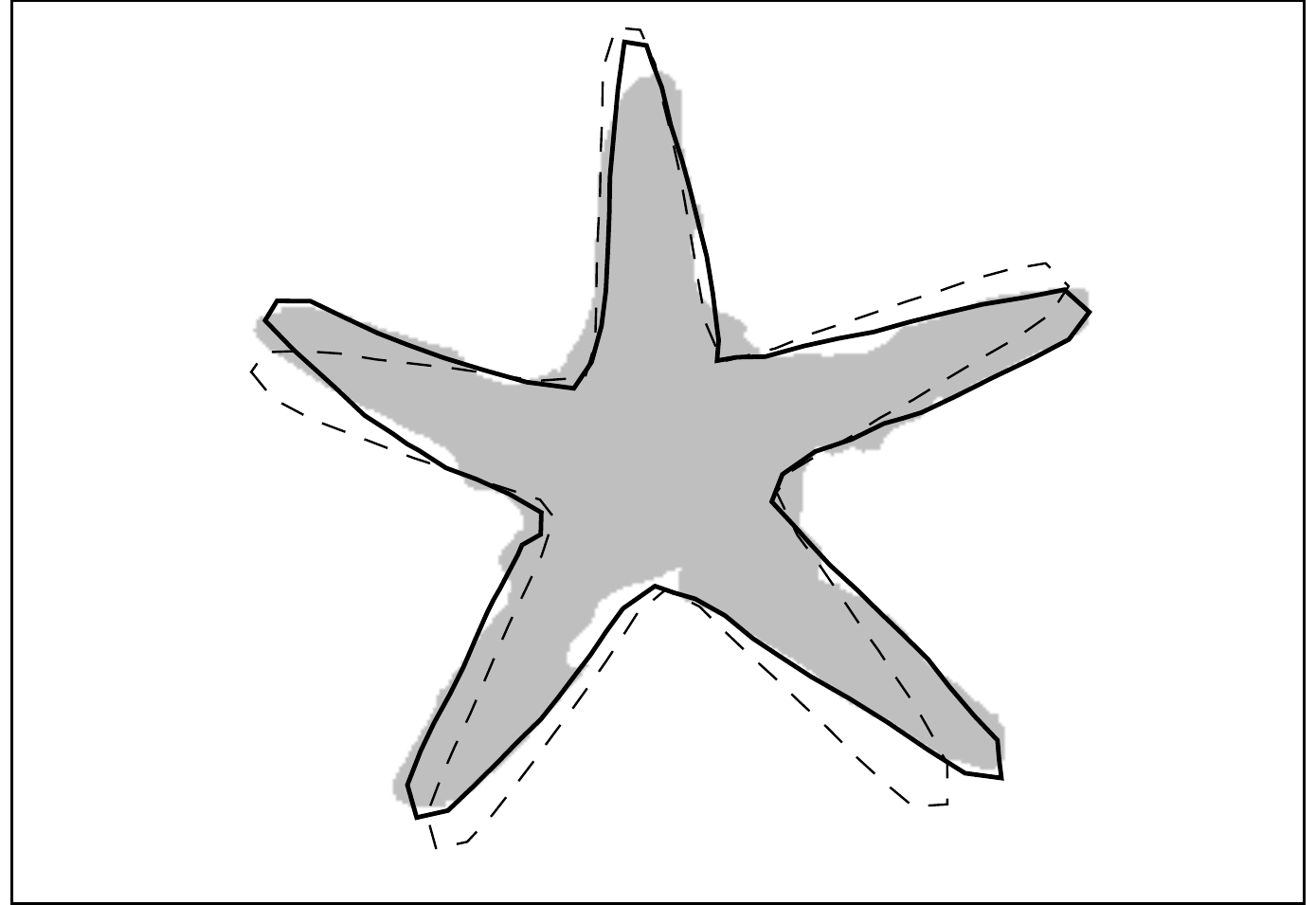} &
		\includegraphics[height=3.2cm]{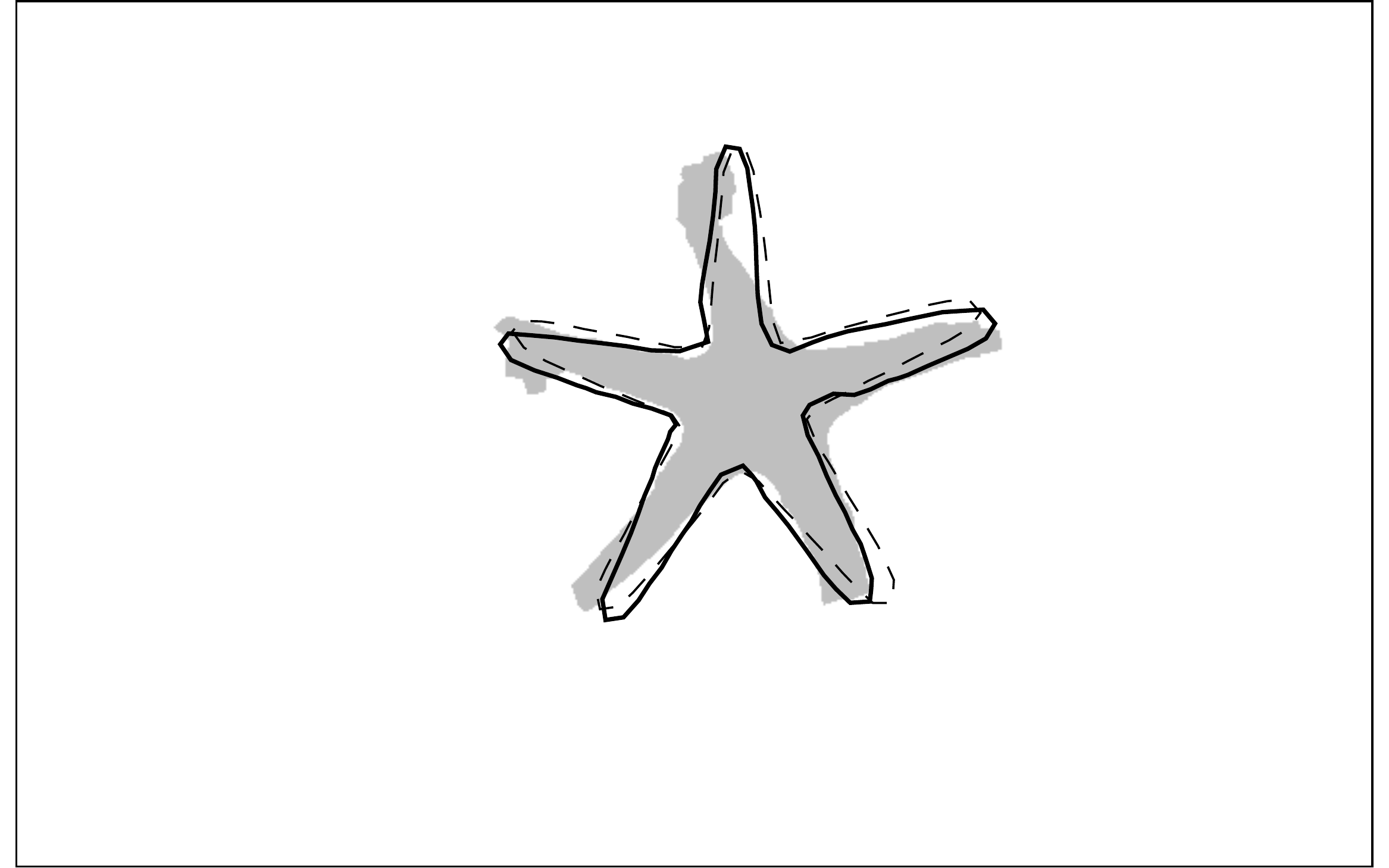} \\
		\end{tabular}
		\caption{\textbf{Range of linear deformation model.} For the segmentations in Fig.~\ref{fig:7} we illustrate here in the same order the relative configuration of the template after translation and rotation (dashed lines), the fully transformed template (black lines) and the segmentation (gray shading). The experiments used three isometric (translation + rotation) and ten statistical modes. One can see that substantial changes in shape can be encoded by the linear modes. On the top-right an example is shown where the deformation coefficients $\XCoef$ have become too large and the shape looks distorted.}
		\label{fig:8}
	\end{figure}

	In Fig.~\ref{fig:9} the scale invariance of the approach is demonstrated by actually deliberately breaking it. The same functional is optimized twice, but with a different prior on the allowed object scale. Depending on the admissible scale, once the large and once the small clownfish is segmented. Such a task can only be solved with global optimization techniques.

	\begin{figure}
		\centering
		\includegraphics[width=6cm]{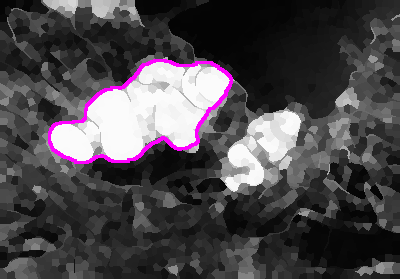}
		\hskip 2cm
		\includegraphics[width=6cm]{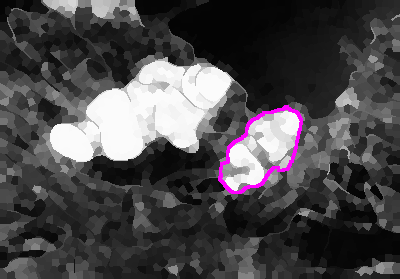}
		\caption{\textbf{Scale invariant segmentation.} Similar to the starfish experiment, a super-pixel image of two clownfish is to be segmented, based on an imperfect local color classifier. With the aid of a shape prior, by setting a preferred range of object scales, but leaving the rest of the approach scale invariant, depending on the choice, both the large and the small fish are correctly located. Note that this example also requires proper modelling of the object boundary (see Fig.~\protect\ref{fig:2}).}
		\label{fig:9}
	\end{figure}
	
	\paragraph{Inhomogeneous $c_{\feat}$.}
	So far we have only considered the case where $c_{\feat}(f_x,f_y)$ was constant w.r.t.~$x$. However, computationally there is no increase in complexity if we pick a more general feature cost. The potential of this additional freedom is now demonstrated on an example with the UIUC database (see for example \cite{UIUCDatasetPaper}). This is a set of gray level side views of parking cars.
Locating these cars cannot be approached with a homogeneous foreground\,/\,background detector, as no consistent separation based on local appearance features seems to be possible.

	Therefore we now learn local detectors for each point of the template $X$ separately and based on these compute an inhomogeneous $c_\feat$.
	As features we use local histograms of the image color and its gradient.
	We compute assignments between the learned template and the training cars (both shapes fixed, only geometric, no appearance cost). Based on these assignments we extract for each template point $x$ the collection of expected features $f_x$. Then, on a test image $Y$ we compare for each super-pixel $y \in Y$ its histogram of features $f_y$ with the distribution of expected features $f_x$ on each template point via an optimal transport based histogram distance (see e.g.~\cite{Pele2009}). These comparison costs were used as costs $c_\feat(f_x,f_y)$.

	We want to emphasize at this point that we do in no way champion this particular choice of features and this choice does not constitute a part of our presented framework. We merely seek to provide a transparent set-up to demonstrate the benefit of locally adaptive template appearance without obstruction through more complicated feature acquisition and processing.

	Fig.~\ref{fig:10} gives an impression of the functions $c_\feat(f_x,f_y)$ obtained in this way. Obviously, for a single template point $x \in X$ the associated cost is very noisy and not very informative. We can thus only hope that through the combination of all template pixels and the knowledge about their relative spatial arrangement we can identify the positions of the cars.

	\begin{figure}
		\centering
		\begin{tikzpicture}[every node/.style={inner sep=0, anchor=north west}]
			\node at (0,0) {\includegraphics[width=4cm]{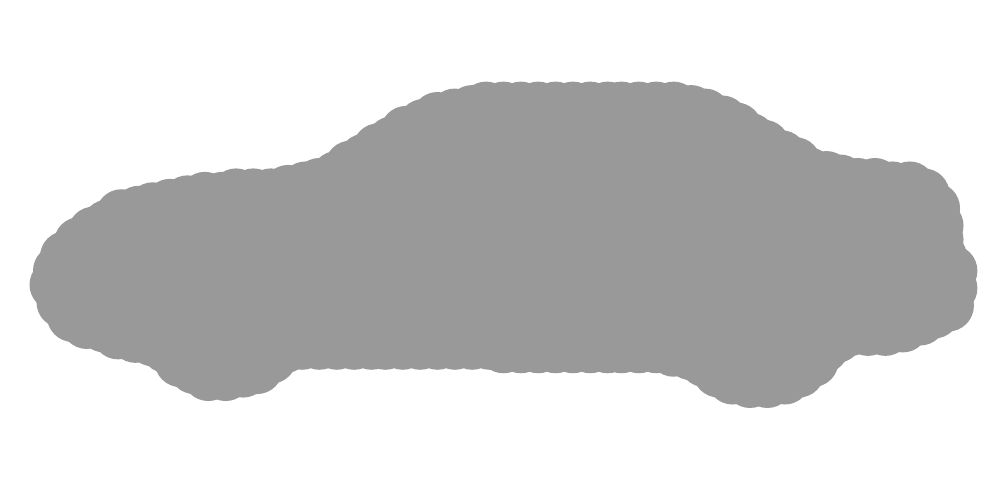}};
			\node at (4.5,0) {\includegraphics[width=4cm]{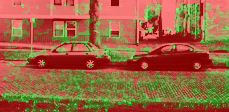}};
			\node at (0,-2.2) {\includegraphics[width=4cm]{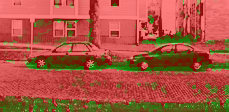}};
			\node at (4.5,-2.2) {\includegraphics[width=4cm]{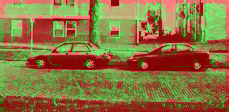}};
			\begin{scope}[xshift=2.105cm,yshift=-1cm,x=0.0421cm,y=0.0421cm]
				\foreach \y / \x / \lbl in {3.45294/-44.7131/1,11.1189/-25.4462/2,7.23007/20.4108/3}
					\node at (\x,-\y) [shape=circle,fill=white,draw=black,anchor=center,inner sep=1pt]{\lbl};
			\end{scope}
			\foreach \x / \y / \lbl in {4.5/0/1,0/-2.2/2,4.5/-2.2/3}
					\node at ($(\x,\y)+(3.7,-0.3)$) [shape=circle,fill=white,draw=black,anchor=center,inner sep=1pt]{\lbl};
		\end{tikzpicture}
		\caption{\textbf{Inhomogeneous appearance model.} For each template pixel a local appearance model was learned. \textit{Top left:} template $X$ with three selected (super-)pixels $\{x_i\}_i$. \textit{Top right, bottom row:} costs $c_\feat(x_i,\cdot)$ for the three selected pixels. %
		The appearance cost of single pixels is not very informative. Only by combining costs from all template pixels and their relative spatial position enables one to find the objects (Fig.~\protect\ref{fig:11}).}
		\label{fig:10}
	\end{figure}
	
	Since the variation of the shapes of the cars is small we only consider translations during branch and bound for locating the cars. Geometric flexibility beyond that is provided by the optimal transport matching. In this way on 10 out of 15 test images the global optimum correctly corresponded to a car (some images show multiple cars). As baseline we performed a simple Hough transform which failed to correctly locate any car. Fig.~\ref{fig:11} gives some example cases and also illustrates a failed case.

	\begin{figure}
		\centering
		\begin{tabular}{cc}
			\includegraphics[height=3cm]{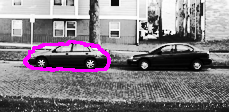} &
			\includegraphics[height=3cm]{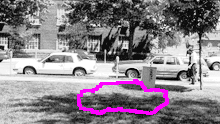} \\
			\includegraphics[height=3cm]{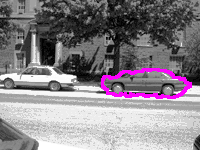} &
			\includegraphics[height=3cm]{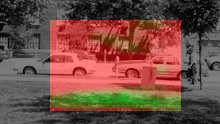} \\			
		\end{tabular}
		\caption{\textbf{Locating cars with a spatially inhomogeneous appearance model.} \textit{Left column.} Two successful examples of locating a car within the test image. \textit{Right column.} Top: A failed example. Bottom: plot of the matching cost depending on translation. Apparently the chosen features are too simple: the shady patch of lawn in the foreground has by far the best cost. Note however that on the right car there is a distinct local minimum.}
		\label{fig:11}
	\end{figure}

	A similar experiment was performed in \cite{LempitskyBranchAndMincut2008}. There the main focus was on modelling the boundary of the cars whereas here we concentrate on its region. Both approaches can incorporate both cues from the object interior as well as its boundary. In Sect.~\ref{sec:OptimizationGraphCut} it was discussed how \cite{LempitskyBranchAndMincut2008} is closely related to the graph-cut relaxation of our functional. The most significant difference is how in our approach the geometric variability is explicitly modelled by a linear space of modes whereas in \cite{LempitskyBranchAndMincut2008} it is implicitly encoded in a hierarchical clustering.
	
	\paragraph{Adaptive $c_\feat$.}
	We have already mentioned in Remark \ref{rem:cFeatTransform} that the Wasserstein modes can also be extended beyond geometric variations to the feature component.
	This is of particular use when an expected feature is known to change under a certain geometric transformation. For example the orientation of an expected gradient changes with rotation. More generally, a vector valued feature $f_x$ will have to be transformed by
	\begin{align}
		D \XEmb_\XCoef(x) = \id + \sum_{i=1}^n \XCoef_i\,D\XMode_i(x)\,,
	\end{align}
	the Jacobian of the applied transformation, to preserve it's `relative orientation' within the template, and we see that this yields a linear deformation on the feature space.
	
	Here we provide a simple example to point out the potential of this flexibility. We now assume that both location and expected feature of a template point vary with the transformations.
	We model this by linearly expanding $c_\feat$ in $\XCoef$ around the origin. That is we choose (c.f.~(\ref{eq:cFeatTransformTransform}-\ref{eq:cFeatTransformCost})):
	\begin{align}
		\hat{c}\big(\XEmbFeat_\XCoef(x),(y,f_y)\big) & {} = \cGeo\big(\XEmb_\XCoef(x),y\big) + c_\feat(f_x,f_y) + \sum_{i=1}^n \XCoef_i \cdot c_{\feat,i}(f_x,f_y)
	\end{align}
	where $c_{\feat,i}(f_x,f_y)$ is the partial derivative of the feature component of $\hat{c}\big(\XEmbFeat_\XCoef(x),(y,f_y)\big)$ w.r.t.~$\XCoef_i$ evaluated at zero (thus giving the first order change along the feature component of $\XModeFeat_i$).
	Both discussed optimization schemes can easily be adapted to this extension.
	
	As a toy example we will be looking for apples. Unripe, small apples are assumed to be green, ripe, large apples should have a reddish color. That is, the expected color varies with size (Naturally the apparent size of an apple on the image depends strongly on the distance from the camera. But we will generously overlook this for the sake of the demonstration.)
	The results of our search for fruit are illustrated in Fig.~\ref{fig:12}.

	\begin{figure}
		\centering
		\includegraphics[width=6cm]{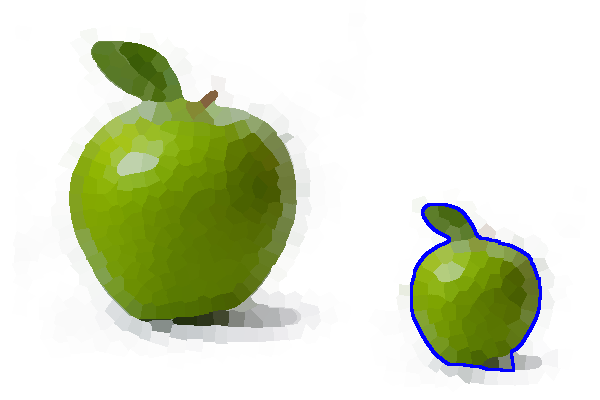}
		\hskip 2cm
		\includegraphics[width=6cm]{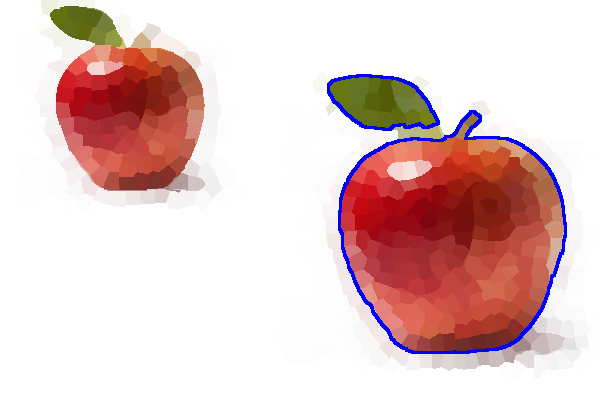}
		\caption{\textbf{Locating apples with dynamic appearance.} We are looking for apples in the test image. According to our model, small apples should appear green (unripe) and large apples reddish. This change in appearance, depending on the geometric state, can be encoded by a Wasserstein mode that extends to the feature cost function. Consequently, the small green and the large red apple are detected, while the `implausible' large green and small red apple are discarded.}
		\label{fig:12}
	\end{figure}
	
	\paragraph{Point Clouds.}
	Last but not least we want to further illustrate the flexibility of the numerical framework by applying it to a scenario with point clouds. This is relevant when one does not deal with dense images but only with sparse interest points. We give a transparent, synthetic example in Fig.~\ref{fig:13}.
	
	\begin{figure}
		\centering
		\includegraphics[height=4cm]{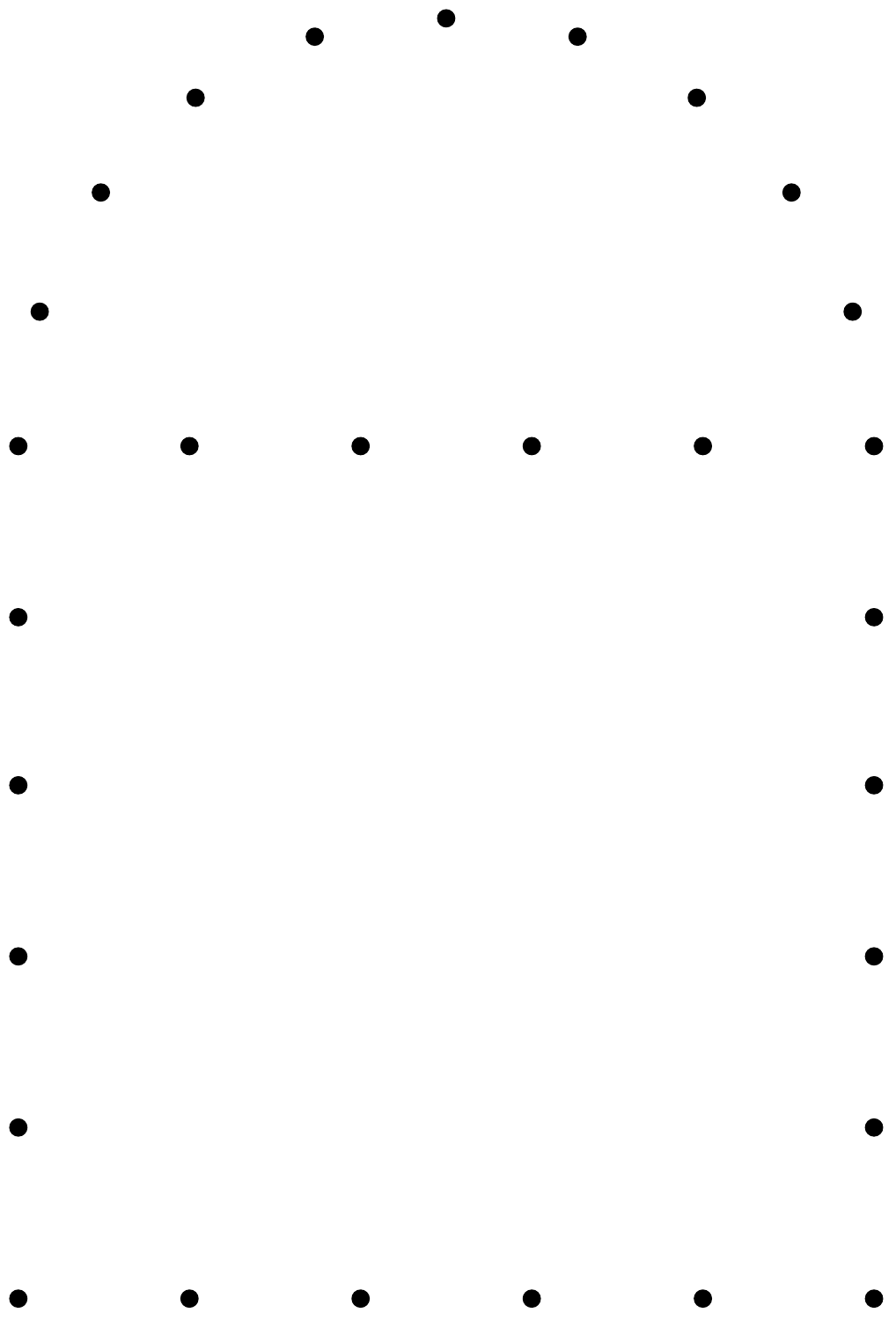}
		\hskip 1cm
		\includegraphics[height=4cm]{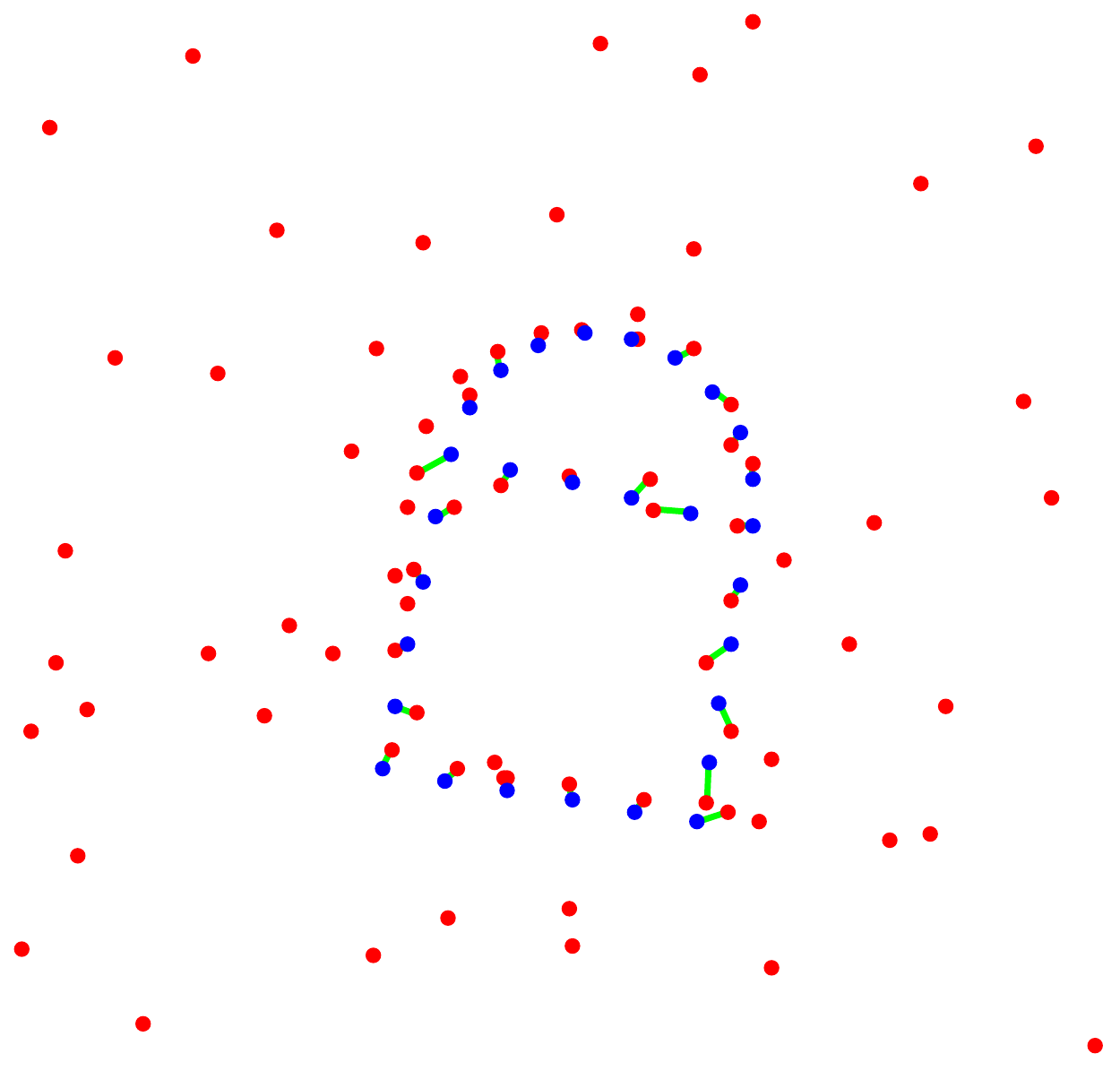}		
		\caption{\textbf{Segmenting and matching on point clouds.} \textit{Left:} the template, a schematic `gate'. \textit{Right:} the original shape is subjected to perspective transformations (foreshortening, rotation, scale), noise and additional, noisy observations are added. With branch and bound the original shape is detected. This could be applied to matching sparse interest points on images.}
		\label{fig:13}
	\end{figure}

\section{Conclusion}
	We have presented a functional for simultaneous image segmentation and shape matching to correctly locate and segment objects within images under noisy conditions.
	
	Matching is based on optimal transport with a cost function that combines geometric plausibility with consistency of appearance features. Through the convex Kantorovich formulation in terms of coupling measures the functional can naturally be combined with other segmentation terms known from convex variational image segmentation.
	To implement geometric invariances and to account for non-isometric shape variations we introduced additional degrees of freedom, drawing from the Riemannian structure of the 2-Wasserstein space. Through an equivalence relation of the class of shape measures with closed contours this enabled us to introduce well established shape analysis tools from the contour regime into the segmentation approach while remaining in the measure representation.
	
	While the resulting functional is non-convex, this non-convexity is constrained to a low dimensional variable which allowed us to devise an adaptive convex relaxation on which a globally optimal branch\,\&\,bound optimization scheme could be constructed.
	Alternatively, a faster but only locally optimal alternating optimization scheme was discussed. While it seems impractical to run the branch and bound scheme on a high number of deformation modes, it still provides a consistent way to find good initializations for the alternating scheme, thus overcoming a severe problem in many other segmentation\,/\,matching approaches. Determining a good initial guess and the subsequent `fine tuning' are based on the very same model and only differ in the application of the optimization scheme.
	To reduce numerical complexity, a graph-cut relaxation was discussed.
	
	In Sect.~\ref{sec:Experiments} we presented a series of numerical examples to demonstrate various aspects of the approach. The basic behaviour of the branch and bound scheme was illustrated as well as the limitations of the alternating scheme. It was shown how the location and shape of the optimal segmentations depend on noise and how different kinds of noise can at least partially be handled by properly choosing the weights between the different terms of the functional.
	We put a particular focus on illustrating the flexibility in both spatial data structure (pixels, super-pixels, point clouds) as well as in incorporating different types of knowledge on the object appearance (spatially varying, adaptive to deformations).
	
	In the presented state a major limitation of the functional is the linearity of the modes: this makes it difficult to handle large deformations. In this respect other approaches such as the LDDMM framework \cite{GlaunesTrouveYounesCVPR2004,LDDMM2005,YounesShape2010} are already much further developed, yet focus on smooth registration mappings without addressing variational segmentation simultaneously and explicitly.
	On the other hand we notice that in terms of handling local feature data this approach is similarly flexible (compare for example with \cite{TrouveFunctionalCurrents2014}).
	Also, we consider the branch and bound scheme as an important step towards coherently solving the initialization problem.	
	
	Future work should therefore focus on making the deformations more flexible and powerful while trying to retain the ability to obtain robust initializations.
	
	{\bf Acknowledgement.}
	This work was supported by the DFG, grant GRK 1653.

\addcontentsline{toc}{section}{References}
\bibliography{references-1,references-2}{}
\bibliographystyle{plain}

\end{document}